\def\eqref#1{equation~\ref{#1}}
\def\1{\bm{1}}
\def\rvx{{\mathbf{x}}}
\def\rvy{{\mathbf{y}}}
\DeclareMathAlphabet{\mathsfit}{\encodingdefault}{\sfdefault}{m}{sl}
\SetMathAlphabet{\mathsfit}{bold}{\encodingdefault}{\sfdefault}{bx}{n}
\newcommand{\KL}{D_{\mathrm{KL}}}
\DeclareMathOperator*{\argmax}{arg\,max}
\theoremstyle{plain}
\newtheorem{theorem}{Theorem}[section]
\newtheorem{lemma}[theorem]{Lemma}
\theoremstyle{definition}
\theoremstyle{remark}
\icmltitlerunning{Generalists vs. Specialists: Evaluating LLMs on Highly-Constrained Biophysical Sequence Optimization Tasks}
\begin{document}


\twocolumn[
\icmltitle{
    Generalists vs. Specialists: Evaluating LLMs on \\ Highly-Constrained Biophysical Sequence Optimization Tasks
}


\icmlsetsymbol{equal}{*}


\begin{icmlauthorlist}
\icmlauthor{Angelica Chen}{equal,nyu,gne-nyc}
\icmlauthor{Samuel D. Stanton}{equal,gne-nyc}
\icmlauthor{Frances Ding}{gne-sf}
\icmlauthor{Robert G. Alberstein}{gne-sf}
\icmlauthor{Andrew M. Watkins}{gne-sf}
\icmlauthor{Richard Bonneau}{gne-nyc}
\icmlauthor{Vladimir Gligorijevi\'{c}}{gne-nyc}
\icmlauthor{Kyunghyun Cho}{gne-nyc}
\icmlauthor{Nathan C. Frey}{gne-nyc}
\end{icmlauthorlist}

\icmlaffiliation{nyu}{Center for Data Science, New York University, New York City, U.S.A.}
\icmlaffiliation{gne-nyc}{Prescient Design, Genentech, New York City, U.S.A.}
\icmlaffiliation{gne-sf}{Prescient Design, Genentech, San Francisco, U.S.A.}

\icmlcorrespondingauthor{Angelica Chen}{angelica.chen@nyu.edu}
\icmlcorrespondingauthor{Samuel Stanton}{stanton.samuel@gene.com}

\icmlkeywords{Machine Learning, ICML}

\vskip 0.3in
]



\printAffiliationsAndNotice{Work done at Genentech}  

\begin{abstract}
Although large language models (LLMs) have shown promise in biomolecule optimization problems, they incur heavy computational costs and struggle to satisfy precise constraints. On the other hand, specialized solvers like LaMBO-2 offer efficiency and fine-grained control but require more domain expertise. 
Comparing these approaches is challenging due to expensive laboratory validation and inadequate synthetic benchmarks. 
We address this by introducing Ehrlich functions, a synthetic test suite that captures the geometric structure of biophysical sequence optimization problems. 
With prompting alone, off-the-shelf LLMs struggle to optimize Ehrlich functions. 
In response, we propose \textsc{LLOME} (Language Model Optimization with Margin Expectation), a bilevel optimization routine for online black-box optimization.
When combined with a novel preference learning loss, we find \textsc{LLOME} can not only learn to solve some Ehrlich functions, but 
can even perform as well as or better than LaMBO-2 on moderately difficult Ehrlich variants. 
However, LLMs also exhibit some likelihood-reward miscalibration and struggle without explicit rewards. 
Our results indicate LLMs can occasionally provide significant benefits, but specialized solvers are still competitive and incur less overhead.
\end{abstract}


\begin{figure}[ht!]
    \centering
    \includegraphics[width=0.8\linewidth]{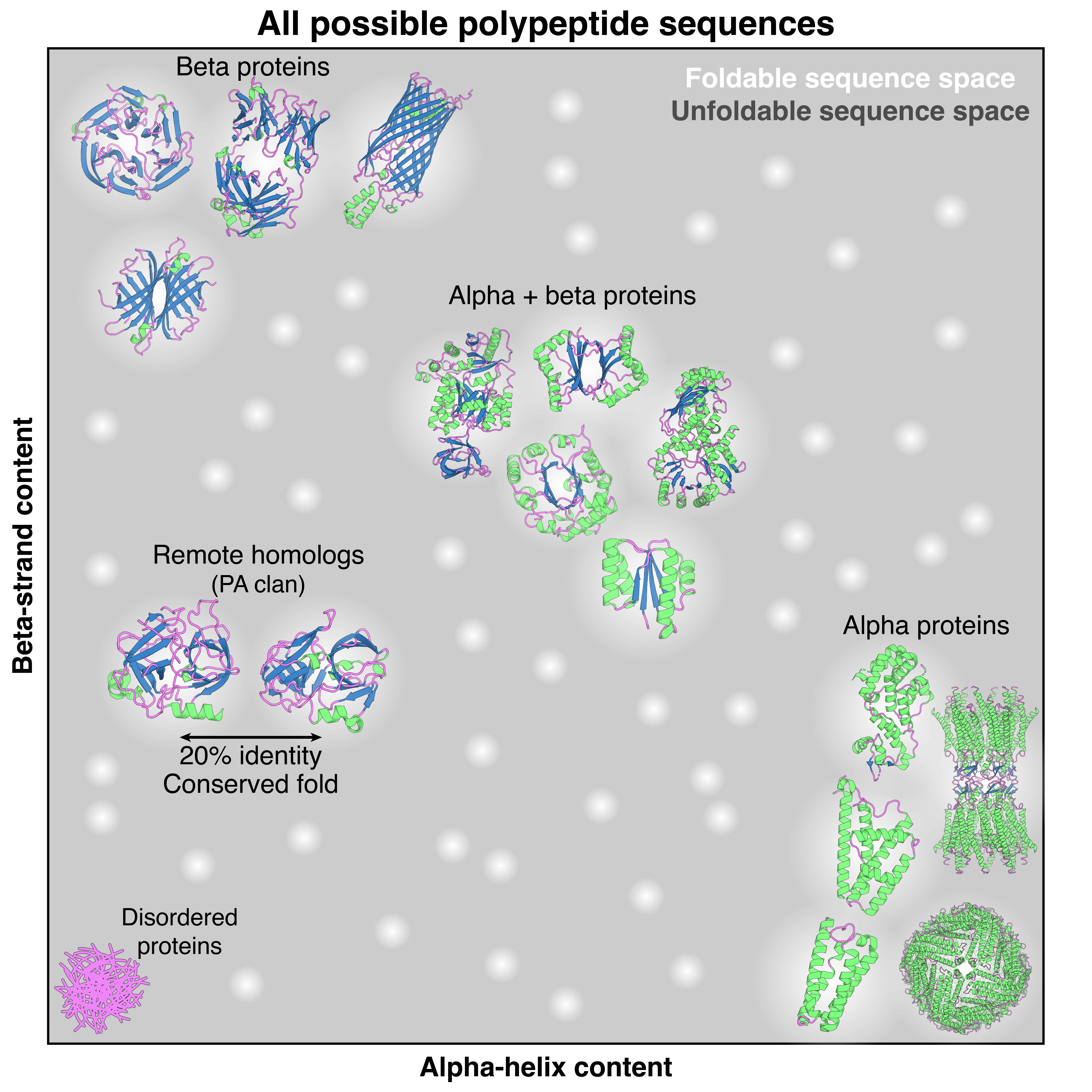}
    \caption{
        The space of all possible polypeptide sequences is vast, but only a tiny fraction forms stable, folded (\textit{i.e.} feasible) proteins. Different protein families (alpha, beta, and mixed alpha/beta) generally occupy distinct regions of sequence space, but disordered proteins and remote homologs with conserved structure and low sequence identity illustrate the complexity of the protein design landscape.
    }
    \label{fig:all_polypeptide_molecules}
\end{figure}

\begin{figure*}
    \centering
    \includegraphics[width=0.9\textwidth,trim={0 4.5cm 0 4.4cm},clip]{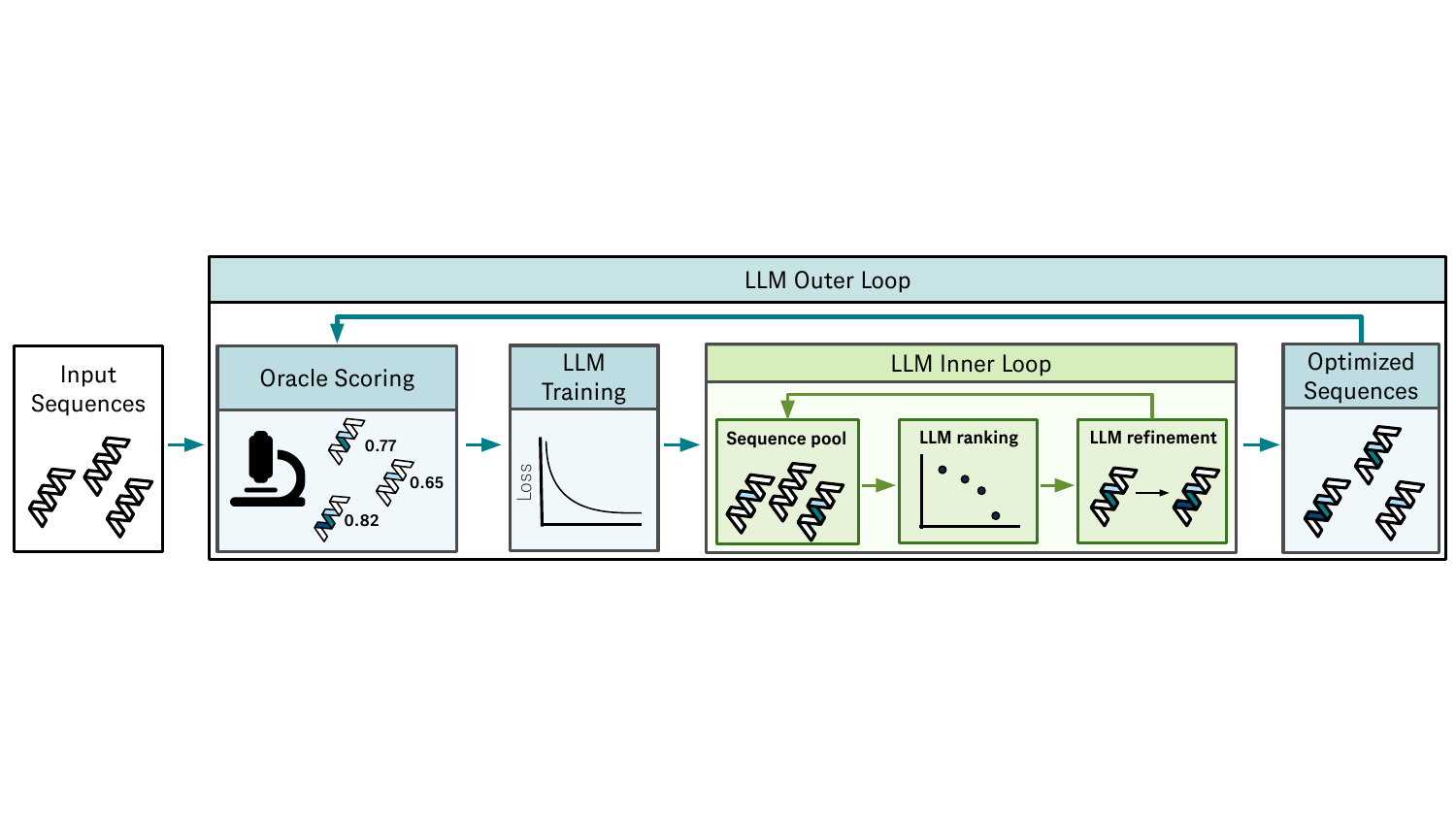}
    \caption{
        An overview of Large language model optimization with Margin Expectation (\textsc{Llome}).
        \textsc{Llome} alternates between two optimization loops: (1) the outer loop trains the LLM on oracle-labeled data, and (2) the inner loop generates candidate sequences through iterative refinement without oracle access. At each outer loop iteration, the highest-ranked candidates are evaluated by the oracle to generate training data for the next iteration. LLOME enables effective optimization while minimizing expensive oracle queries through its bi-level structure.
    }
    \label{fig:sherpa}
\end{figure*}

\RestyleAlgo{ruled}
\SetKwInput{KwData}{Input}
\SetKwInput{KwResult}{Output}

\section{Introduction}

Despite their remarkable abilities, large language models (LLMs) often fail at tasks with fine-grained constraints --- recent work has shown that even state-of-the-art models struggle to reliably generate text with a fixed number of words or to incorporate specific keywords or constraints \citep{garbacea2022constrained, sun-etal-2023-evaluating, yuan2024followinglengthconstraintsinstructions, chen_2024_benchmarking_ctg}.
This limitation becomes especially critical in black-box biomolecule optimization problems (Figure \ref{fig:all_polypeptide_molecules}), where even minor violations of biophysical constraints like protein stability or solubility can render a solution impossible to synthesize, purify, and assay \citep{hie2024efficient, ismail2024concept}.
While specialized solvers like LaMBO-2 address these constraints through careful modeling choices and architectural design \citep{gruver2024protein}, adapting such solvers to new domains requires significant domain expertise and engineering effort.
Recent work suggests that the capabilities of LLMs may be attainable at a far lower cost than previously thought \citep{zhu2024distilling, guo2025deepseek}, and a deeper question remains: can improved preference learning methods help these models combine human-like flexibility with precise constraint satisfaction?

A fundamental challenge obstructing the development of LLMs as black-box optimization (BBO) solvers is evaluation.
Unlike typical machine learning (ML) benchmarks for supervised and unsupervised models, optimization algorithms cannot be evaluated with a static dataset unless the search space is small enough to be exhaustively enumerated and annotated with the test function.
Existing synthetic test functions commonly used in BBO research \citep{molga2005test} have very well-documented structure and solutions, making train-test leakage into pretrained LLM weights almost certain.
Real-world black-box objectives by definition do not have formally characterized structure or solutions and are usually expensive to query. 
For instance, biomolecule optimization tasks require wet lab experiments for verification and chatbot systems require online user feedback, which is unsuitable for rapid development.

It is clear that test functions that are both more accessible and more difficult are needed for early-stage research and validation.
We propose \textit{Ehrlich functions}\footnote{Named after Paul Ehrlich, an early pioneer of immunology.} as an idealized model of real biomolecule BBO tasks like antibody affinity maturation, building on principles from structural biology and biomolecular engineering experience.
Ehrlich functions have adjustable difficulty and are always provably solvable; easy instances can be solved quickly by a genetic algorithm and used for debugging, but the same algorithm fails to solve harder instances after consuming over 500M function evaluations.
These results can be reproduced in minutes on a single GPU.
Importantly, Ehrlich functions are procedurally generated and not yet compromised by train-test leakage to pretrained LLMs. 
State-of-the-art LLM chatbots struggle to solve Ehrlich functions to optimality by prompting alone, even when the prompt reveals the entire test function.
We show that LLMs nevertheless \textit{can} be taught to solve some Ehrlich functions when used to drive a bilevel optimization loop with online feedback, and are particularly effective when paired with a novel preference learning loss.
Finally, a non-trivial closed-form objective allows us to deeply study preference learning itself, leading to new insights.
In summary, our findings and contributions are as follows:
\begin{enumerate}[leftmargin=*]
    \item \textbf{Novel Test Functions for BBO and Preference Learning:}
            Ehrlich functions are accessible to all researchers, difficult for state-of-the-art solvers, and well-motivated by real biomolecule optimization problems.
    \item \textbf{\textsc{Llome} (Large Language Model Optimization with Margin Expectation)} 
        We propose a bilevel optimization algorithm that allows LLMs to explore and learn new capabilities from online feedback when prompting fails.
    \item \textbf{Margin-Aligned Expection (MargE) Loss:} Our experiments on Ehrlich test functions motivate us to propose MargE, a novel training objective that maintains the simplicity of supervised fine-tuning (SFT) and direct preference optimization (DPO), yet outperforms them.
    \item \textbf{New Insights into Preference Learning:} We find that preference-tuned LLM likelihoods do not necessarily correlate with the true objective.
    Furthermore, iterative preference tuning with ground-truth rewards outperforms training on preference pairs alone, and DPO in particular suffers from mode collapse and over-optimization.
    \item \textbf{Comparisons between LLMs and specialized solvers:} We compare \textsc{Llome} to \textsc{LaMBO-2}, a solver purpose-built for constrained discrete BBO. We find that \textsc{Llome} can outperform or perform as well as \textsc{LaMBO-2} on medium difficulty test functions, and is comparable on easier and harder variants, indicating that specialized models remain competitive after accounting for compute cost.
\end{enumerate}

\section{Background}\label{sec:background}
We focus on pre-trained autoregressive large language models $\pi_\theta(x)$ parameterized by $\theta$. $\pi_\theta$ defines a probability distribution over discrete tokens $x\in\mathcal{V}$ for vocabulary $\mathcal{V}$. We can also define the likelihood of sequences $\mathbf{x}\in\mathcal{V}^*$ as $\pi_\theta(\mathbf{x})=\prod_{t=1}^{|\mathbf{x}|}\pi_\theta(x_t|x_{<t})$, where $\mathcal{V}^*$ is the set of all concatenations of tokens in $\mathcal{V}$.

\textbf{Supervised Finetuning (SFT)}
After pre-training, LLMs are typically finetuned on some task-specific dataset $\mathcal{D}=\{(\mathbf{x}_i,\mathbf{y}_i)\}_{i=1}^n$ consisting of pairs of input $\mathbf{x}$ and target $\mathbf{y}$ sequences. During SFT, $\pi_\theta$ is trained to minimize the negative conditional log likelihood of examples from $\mathcal{D}$:
\begin{equation*}
    \mathcal{L}_{\mathrm{SFT}}(\theta) := \underset{\mathbf{x},\mathbf{y}\sim\mathcal{D}}{\mathbb{E}} -\log \pi_\theta(\mathbf{y}|\mathbf{x})
\end{equation*}
\textbf{Preference Learning} 
In some settings, LLMs are further trained to align their output distributions to a \emph{reward distribution}, typically encoded with a reward model $r(\mathbf{x}):\mathcal{V}^*\to\mathbb{R}$. This is frequently accomplished with reinforcement learning, where the LLM is trained via a policy gradient method to maximize the expected rewards $\mathbb{E}_{\rvx\sim\mathcal{D}, \rvy\sim \pi_\theta(\cdot\mid \rvx)}r(\rvx,\rvy)$. The reward model is trained from a dataset of human preferences consisting of triples $(\rvx,\rvy_w,\rvy_l)$, where $\rvx$ is a prompt obtained from some offline dataset $\mathcal{X}$ and $\rvy_w$ and $\rvy_l$ are sampled from the current policy. The initial model is referred to as the \emph{reference policy} $\pi_\text{Ref}$ and $\rvy_w,\rvy_l$ are assigned such that $\rvy_w$ is more preferred by human raters than $\rvy_l$. More recently, practitioners have added a KL regularizer to prevent the LLM from quickly over-optimizing, yielding a family of learning algorithms known as \emph{reinforcement learning from human feedback} \citep[RLHF;][]{ziegler2019fine,stiennon2020summarize}, sharing a common objective:
\begin{equation*}
    \mathcal{L}_{\mathrm{RLHF}}(\theta) := \underset{\substack{\rvx\sim\mathcal{X}\\\rvy\sim\pi_\theta(\cdot\mid \rvx)}}{\mathbb{E}} -r(\rvx,\rvy) + \beta\mathbb{KL}(\pi_\theta \| \pi_\text{Ref})
\end{equation*}
RLHF is commonly trained using Proximal Policy Optimization \citep[PPO;][]{schulman2017proximalpolicyoptimizationalgorithms}, which involves considerable engineering complexity due to the need to train and coordinate four models ($\pi_\theta$, $\pi_\text{Ref}$, a reward model, and a critic model). Furthermore, RLHF-PPO is particularly sensitive to hyperparameter values and prone to training instabilities \citep{zheng2023secrets}. To address some of these issues, \citet{rafailov2023direct} proposed an offline version of RLHF known as Direct Preference Optimization (DPO). 
DPO skips reward modeling and directly trains on the preference triples with contrastive objective $\mathcal{L}_{\mathrm{DPO}}(\theta)$:
\begin{align}
    \underset{\rvx,\rvy_w,\rvy_l\sim\mathcal{D}}{\mathbb{E}} -\log&\sigma\bigg( \beta\log\frac{\pi_\theta(\rvy_w|\rvx)}{\pi_\text{Ref}(\rvy_w|\rvx)}
     - \beta\log\frac{\pi_\theta(\rvy_l|\rvx)}{\pi_\text{Ref}(\rvy_l|\rvx)}\bigg), \nonumber
\end{align}
where $\sigma$ is the sigmoid function.
DPO often produces models with similar generative quality as RLHF-PPO, but involves notable tradeoffs such as faster over-optimization \citep{rafailov2024scalinglawsrewardmodel} and a distribution mismatch between the training dataset and policy outputs \citep{chen2024preference,tang2024understandingperformancegaponline}. Nevertheless, DPO has become one of the most prevalent algorithms for offline alignment of LLMs.
We provide further background on past work related to LLMs for optimization in Sec. \ref{app:related-work}.

\section{Related Work}\label{app:related-work}
Our work combines insights from multiple areas of research, including discrete sequence black-box optimization, controllable text generation, and LLMs for optimization and scientific discovery.
See Appendix \ref{subsec:extended_related_work} for further discussion on related work.

\textbf{Discrete Sequence Black-Box Optimization}
Many algorithms for discrete sequence optimization take inspiration from \textit{directed evolution} \citep{arnold1998design}, a combination of random mutagenesis (a means to generate variants of the current solution) and high throughput screening (discriminative selection pressure).
Researchers have explored many types of variant generators, including genetic algorithms \citep{back1996evolutionary, sinai2020adalead}, reinforcement learning \citep{angermueller2020population}, denoising with explicit discriminative guidance \citep{stanton2022accelerating, maus2022local, gruver2024protein}, and denoising with implicit discriminative guidance \citep{tagasovska2024implicitlyguideddesignpropen}.
While these algorithms are all very general \textit{in principle}, in practice a substantial amount of effort is required to actually implement these algorithms for new tasks due to changes in the problem setup. 
Our work investigates whether LLMs can provide a more generalizable approach that extends readily to new problem domains while maintaining competitive performance.

\textbf{LLMs for Optimization and Scientific Discovery} 
Prior work on LLMs for optimization has largely followed two approaches. The first uses LLMs to translate natural language descriptions into formal mathematical representations that can be solved by traditional optimizers \citep{ramamonjison2022nl4opt,ahmed2024lm4optunveilingpotentiallarge}. The second approach leverages LLMs directly as optimizers, often by embedding them within evolutionary algorithms \citep{RomeraParedes2023funsearch,chen2024evoprompting} or using them for prompt-based optimization \citep{yang2024opro}. Most closely related to our work is \citet{ma2024sga}, which also employs LLMs in a bilevel optimization loop. However, while they assume access to gradients through differentiable simulations, our method operates in the more challenging setting where only black-box evaluations are available. We also provide novel insights into how LLMs can improve their optimization capabilities through specialized training objectives, even without access to ground-truth rewards during the inner optimization loop.



\textbf{Controllable Text Generation (CTG)} 
CTG represents a specialized case of optimization where the objective is to generate sequences with specific attribute values rather than maximizing a general objective function. While LLM prompting can effectively control high-level attributes \citep{brown2020gpt3}, precise control remains challenging \citep{carlsson-etal-2022-fine}. 
Two primary approaches have emerged: control codes prepended during training \citep{keskarCTRL2019,padmakumar2023extrapolative,raffel2020t5,Madani2023} and inference-time guidance using auxiliary models \citep{Dathathri2020Plug,liu-etal-2021-dexperts,deng-raffel-2023-reward,dekoninck2024controlled}. 
Our work bridges CTG and optimization by viewing sequence constraints as part of the optimization problem, demonstrating that LLMs can learn to generate sequences satisfying precise constraints through our bilevel optimization framework.

\section{Method}

The goal of any optimization procedure is to find a maximizer $x^* \in \argmax_{x \in \mathcal{F}} f(x)$ of an objective function $f: \mathcal{X} \rightarrow \mathbb{R}$ over a feasible set $\mathcal{F} \subset \mathcal{X}$.
In the \textit{black-box} optimization (BBO) setting, we only have access to zero-order information about $f$. We can query $f$ at different inputs, but we get no other information (e.g., derivative information).
With an infinite query budget, we could find $x^*$ by brute force.
In practice, limited resources usually constrain both human and artificial intelligences to strategies that iteratively refine a current solution $x_i$.
This local, iterative approach aligns with the decision-theoretic concept of \textit{satisficing} ---seeking a ``good enough'' improvement rather than an exhaustive global optimum \citep{simon1956rational, wilson2024stopping}.
Furthermore, in many real high-dimensional search spaces, such iterative local optimization is surprisingly effective \citep{wu2023behavior}.
The success of gradient-based training for neural networks, which finds solutions through iterative local updates, itself underscores the power of local optimization strategies.

\subsection{The LLOME Algorithm}
\label{sec:llome_algorithm}

To adapt LLMs for the aforementioned BBO setting, we propose \textsc{Llome}, which employs a bi-level optimization strategy for iteratively refining an LLM policy $\pi_\theta(\rvy|\rvx)$ to generate improved sequences. This process alternates between improving the policy based on current data (the outer loop) and using the updated policy to generate and select new, high-potential candidates for black-box evaluation (the inner loop). 
The bi-level process unfolds as follows:

\begin{enumerate}
    \item \textbf{Policy Improvement (Outer Loop).} At each iteration $i$, given the cumulative dataset $\mathbb{D}^{(i)}$ (containing all previously evaluated sequences and their scores), the outer loop tunes the policy parameters  $\theta_{i} = \arg\min_\theta \mathcal{L}_{\text{train}}(\theta; \mathbb{D}^{(i)}, \pi_{\text{ref}}).$
    In our primary approach, $\mathcal{L}_{\text{train}}$ is the \textbf{MargE} loss (Sec. \ref{sec:method_theoretical_motivation}), 
    however other preference-based or supervised training objectives can also be used.

    \item \textbf{Policy Execution (Inner Loop).} Using the updated policy $\pi_{\theta_i}$ from the outer loop, the inner loop generates and selects a new batch of candidate sequences for evaluation. This phase operates \textit{without} direct calls to the black-box oracle $f$.
    \begin{itemize}
        \item Input prompts $X_{\text{seeds}}^{(i)}$ are selected from the top-scoring historical sequences in $\mathbb{D}^{(i)}$.
        \item The policy $\pi_{\theta_i}$ generates new candidate output sequences $\{\rvy_1^{(i)}, \dots , 
        \rvy_j^{(i)}\}$ from these seeds via multiple sampling from $\pi_{\theta_i}$. Heuristics such as adjusting sampling temperature are used to manage diversity and avoid premature collapse.
        \item A subset $Y^{(i)} \subseteq \{\rvy_1^{(i)}, \dots , 
        \rvy_j^{(i)}\}$ is chosen by ranking with the policy likelihood $\pi_{\theta_i}(\rvy | \rvx)$.
    \end{itemize}
    \item \textbf{Data Collection:} The selected candidates $Y^{(i)}$ are evaluated by the oracle $f$ to yield a new batch of labeled data $\mathbb{D}^{(i+1)}_{\text{batch}} = \{(\rvy, f(\rvy)) \mid \rvy \in Y^{(i)}\}$. This new data is added to the cumulative dataset: $\mathbb{D}^{(i+1)} = \mathbb{D}^{(i)} \cup \mathbb{D}^{(i+1)}_{\text{batch}}$. The process then returns to the outer loop (Step 1) with the augmented dataset.
\end{enumerate}

\begin{algorithm}[t!]
\SetAlgoLined
\caption{\textsc{Llome}, an approach for bilevel optimization of highly constrained sequence optimization problems with LLMs. We use $n_0=10$, $j=2000$, and $T=10$.}\label{alg:blo}
\textbf{Input: }Scoring function $f$; pretrained LLM $\pi_{\theta_0}$ parameterized by initial weights $\theta_0$; initial seed sequence $\mathbf{x}_0\in\mathcal{F}$; $j$ number of test function evaluations per round; $T$ number of \textsc{Llome} rounds. \\
$S \gets \{(\mathbf{x}_0, f(\mathbf{x}_0))\}$\; \\
$\mathcal{X}_0 \gets \textsc{GeneticAlgorithm}(\mathbf{x}_0,n_0)$ \Comment{\small{Seed with $n_0$ rounds of evolution.}} \\
$S_0 \gets \{(\mathbf{x}, f(\mathbf{x}))\mid \mathbf{x}\in\mathcal{X}_0\}$ \Comment{\small{Score the initial candidates.}} \\
$S \gets S\cup S_0$\; \\
$i \gets 0$\; \\
\While{$i < T$ \Comment{Outer Loop} \\}{ 
    $\mathcal{D}_i\gets \textsc{DatasetFormatting}(S_i)$\; \\
    $\theta_{i+1}\gets \textsc{Train}(\theta_i,\mathcal{D}_i)$\; \label{alg:blo:train} \\
    $\mathcal{X}_{i+1}\gets \textsc{IterativeRefinement}(\pi_{\theta_{i+1}},S_i)$ \Comment{Inner Loop} \\
    $\mathcal{X}_{i+1}\gets \textsc{Filter}(\mathcal{X}_{i+1}, j)$\Comment{\small{Filter $\mathcal{X}_{i+1}$ down to $j$ samples.}} \\
    $S_{i+1}\gets \{(\mathbf{x}, f(\mathbf{x})) \mid \mathbf{x}\in\mathcal{X}_{i+1}\}$ \Comment{\small{Oracle labeling.}} \\
    $S\gets S\cup S_{i+1}$\; \\
    $i \gets i + 1$\;
}
\KwResult{$\arg\max_{(\mathbf{x},f(\mathbf{x}))\in S}f(\mathbf{x})$}
\end{algorithm}


Algorithm~\ref{alg:blo} provides a high-level outline of \textsc{Llome}. 
We collect an initial data package $\mathbb{D}^{(0)}$ from the history of a pre-solver (in our case, $n_0$ iterations of a genetic algorithm, details in Appendix~\ref{app:ga}). 
In real applications, this data package may be provided to the user from historical records.
A key technique employed during policy improvement is dataset matching, an augmentation strategy proposed by \citet{tagasovska2024implicitlyguideddesignpropen}. 
For all $\rvx_k \in \mathbb{D}^{(i)}$, we search $\mathbb{D}^{(i)}$ for other sequences $\rvy'_m$ in its local neighborhood (up to a distance cutoff $\Delta_{\text{match}}$) that represent known improvements (i.e., $f(\rvy'_m) > f(\rvx_k)$). 
See Appendix~\ref{app:algs} for detailed subroutine descriptions.
The modularity of the policy training objective $\mathcal{L}_{\text{train}}$ in the outer loop enables us to benchmark different LLM fine-tuning strategies within the \textsc{Llome} framework. In our experiments, we compare \textsc{Llome-MargE} against variants such as \textsc{Llome-SFT} and \textsc{Llome-DPO}.

\subsection{Theoretical Motivation for MargE}\label{sec:method_theoretical_motivation}
Having described the higher-level overview of \textsc{Llome}, we now motivate the design of our LLM training objective, MargE, from first principles.

To emulate an iterative refinement strategy with an LLM, we pose the task as learning a policy $\pi_\theta(\rvy|\rvx)$ that responds with a refined solution $\rvy \in \mathcal{X}$ given a current prompt solution $\rvx \in \mathcal{X}$, aiming to maximize some measure of local improvement $r: \mathcal{X} \times \mathcal{X} \rightarrow \mathbb{R}$.
Given a finite dataset of triplets $\{(\rvx_i, \rvy_i, r_i)\}_{i=1}^n$ generated through such an iterative process, we seek to learn the optimal policy for selecting improvements.
Ideally, we would want to select $\rvy$ to deterministically maximize $r(\rvx,\rvy)$.
However, acknowledging our bounded compute resources (i.e., fixed model architecture and training budget) and incomplete information (finite training data), we invoke the \textit{principle of maximum entropy} \citep{jaynes1957information} as our guide for choosing the form of the optimal \textit{stochastic} policy $\pi^*$.
This principle leads us to the Boltzmann distribution $\pi^*(\rvy | \rvx) \propto \exp(\beta \cdot r(\rvx, \rvy))$, where $\beta > 0$ is the \textit{rationality parameter} controlling the explore-exploit tradeoff.
The Boltzmann distribution is known to maximize policy entropy subject to the constraint of achieving a certain expected reward \citep{ortega2013thermodynamics, jeon2020reward}, and is mathematically equivalent to the Bradley-Terry preference model \citep{luce1959individual}.

To ensure stable learning and ground our policy, we regularize $\pi_\theta$ towards a prior policy $\pi_{\mathrm{ref}}$ (e.g., a base pretrained LLM).
Our theoretical policy objective is an instance of the generalized variational Bayes (GVB) framework \citep{knoblauch2019generalized}. 
We call our specific instantiation the \textbf{F}orward-\textbf{Re}verse \textbf{KL} (FReKL) loss, which we define as follows:
\begin{equation}
    \mathcal{L}_{\text{FReKL}}(\theta) := \KL(\pi_\theta \parallel \pi^*) + \lambda\KL(\pi_{\text{ref}} \parallel \pi_\theta). \label{eq:obj_frekl_theoretical}
\end{equation}
This objective is designed to achieve several key desiderata. First, the data term $\KL(\pi_\theta \parallel \pi^*)$ pushes the learned policy $\pi_\theta$ towards the optimal target policy $\pi^*$. Second, this formulation satisfies the Strong Interpolation Criteria \citep[SIC;][]{hu2024new}, ensuring a smooth tradeoff between $\pi^*$ and the reference policy $\pi_{\mathrm{ref}}$ as governed by $\lambda \in [0, +\infty)$ (proof in Appendix \ref{app:sic_proof}). For the regularization term, we employ a mass-covering reverse KL divergence, $\lambda\KL(\pi_{\text{ref}} \parallel \pi_\theta)$. This choice encourages $\pi_\theta$ to retain broad coverage over regions where $\pi_{\mathrm{ref}}$ assigns probability,  which helps avoid premature policy collapse by maintaining exploratory breadth. 
Furthermore, the reverse KL \textit{does not} heavily penalize $\pi_\theta$ for placing mass on optimal responses in the tails of $\pi_{\mathrm{ref}}$ and is naturally suited for training with off-policy data, as its expectation is taken with respect to $\pi_{\mathrm{ref}}$. 
In contrast, objectives regularized by a forward KL (e.g., $\KL(\pi_\theta \parallel \pi_{\text{ref}})$ as in \citet{steinberg2024variational}) may also satisfy SIC; however, they are \textit{prior} mode-seeking and better suited for on-policy sample estimation.
In the next section, we discuss a computationally tractable form of Eq. (\ref{eq:obj_frekl_theoretical}) and our specific choice of improvement measure $r(\rvx, \rvy)$.

\subsection{Computing the FReKL Loss}\label{sec:marge}

Direct optimization of Eq. (\ref{eq:obj_frekl_theoretical}) is challenging due to the KL terms involving the intractable partition function of $\pi^*$.
To arrive at a computationally tractable form suitable for training with off-policy data, we expand the KL divergence terms and apply importance sampling, using samples from the reference policy $\pi_{\mathrm{ref}}$, yielding the following expression for $\mathcal{L}_\text{FReKL}(\theta)$:
\begin{align}\label{eq:frekl_computable_form}
    \underset{\substack{\rvx\sim\mathbb{D}_\rvx,\\ y\sim \pi_\text{Ref}(\cdot\mid \rvx)}}{\mathbb{E}}  
     \left[ \frac{\rho_\theta^{\rvx\rvy} - \lambda}{|\rvy|} \log\pi_\theta(\rvy|\rvx) - \rho_\theta^{\rvx\rvy} \cdot \beta r(\rvx,\rvy) \right],
\end{align}
where $\mathbb{D}_\rvx$ is the distribution of input prompts, $\rho_\theta^{\rvx\rvy} := \pi_\theta(\rvy|\rvx) / \pi_\text{Ref}(\rvy|\rvx)$ is the importance weight likelihood ratio, and $|\rvy|$ normalizes the policy log-likelihood by the length of the response. The full derivation, along with a discussion of its design principles in comparison to DPO and RLHF, can be found in Appendix~\ref{app:derivations}.


\subsection{The Margin Reward Function} 

Thus far we have worked with the FReKL loss in general form for an arbitrary reward function $r(\rvx,\rvy)$. 
For iterative refinement behavior, we propose a specific reward structure that directly quantifies the notion of improvement. We define the \textit{margin reward} as:
\begin{align}\label{eqn:reward-fn}
    r_{\text{margin}}(\rvx,\rvy) := \begin{cases}
        f(\rvy) - f(\rvx) & \text{if } f(\rvy) > f(\rvx), \\
        0 & \text{otherwise,}
    \end{cases}
\end{align}
where $f(\rvx)$ is the objective value of the prompt and $f(\rvy)$ is the value of the response. 
Typically if $\rvx \notin \mathcal{F}$ (the feasible set) $f(\rvx)$ is defined to be $-\infty$, however we set $f(\rvx)=0$ for all infeasible $\rvx$ to avoid infinite rewards.

The specific form of $r_{\text{margin}}(\rvx,\rvy)$, particularly the clipping at zero, is important. While simply using $f(\rvy)-f(\rvx)$ as the reward might seem intuitive, it can lead to unintended policy behavior due to the translation-invariance property of Boltzmann distributions. 
Specifically, an unclipped reward implies $\pi(\rvy|\rvx)=\pi(\rvy|\rvx')$ for all pairs $\rvx,\rvx'\in \mathcal{X}$, meaning the preference for any response $\rvy$ is independent of the prompt and depends only on $f(\rvy)$ (see Appendix~\ref{thm:translation-invariance} for details).
Such prompt-independent responses would not satisfy our goal of local refinement. 
Clipping the reward at $0$ when $f(\rvy) \le f(\rvx)$ resolves this issue and interestingly resembles a margin loss. 
Margin-based rewards are popular in the Bayesian Optimization literature; however, improvements are usually calculated against a fixed baseline, e.g. the best-observed value \citep{movckus1975bayesian, jones1998efficient}.

To avoid ambiguity, when the FReKL loss (Eq. \ref{eq:frekl_computable_form}) is combined with the margin reward function (Eq. \ref{eqn:reward-fn}), we will call the resulting policy objective the \textbf{Marg}in-Aligned \textbf{E}xpectation (MargE) loss in the remainder of this work.

\section{Evaluation}

We now introduce \emph{Ehrlich functions}, a class of closed-form test functions that capture the geometric structure of biophysical sequence optimization problems while enabling fast evaluation and reproducible benchmarking. These functions maintain key characteristics of real fitness landscapes, non-additive mutational effects, while avoiding the fidelity-latency tradeoffs inherent in simulation-based approaches \citep{kellogg2011role,hummer2023investigating}.



\begin{figure*}[ht!]
    \centering
    \begin{subfigure}{0.24\linewidth}
        \includegraphics[width=\linewidth]{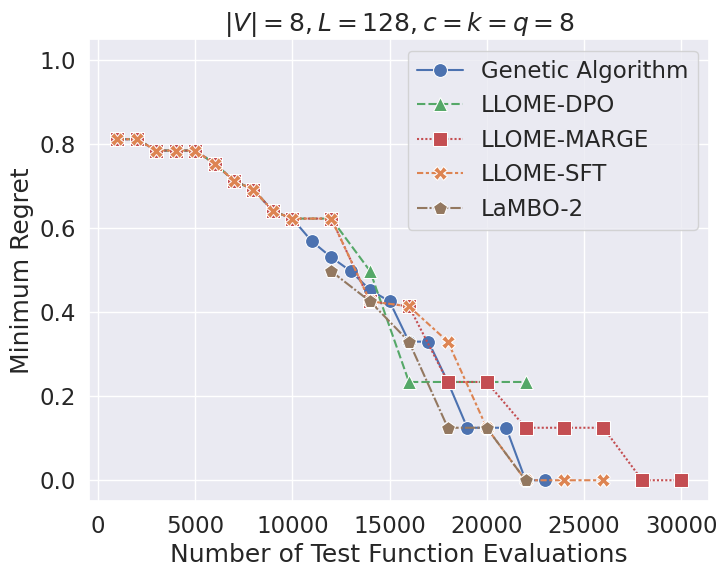}
        \caption{\textbf{Ehr(8, 128)-8-8-8} \textit{i.e.} $f_1$}
    \end{subfigure}\hfill
    \begin{subfigure}{0.24\linewidth}
        \includegraphics[width=\linewidth]{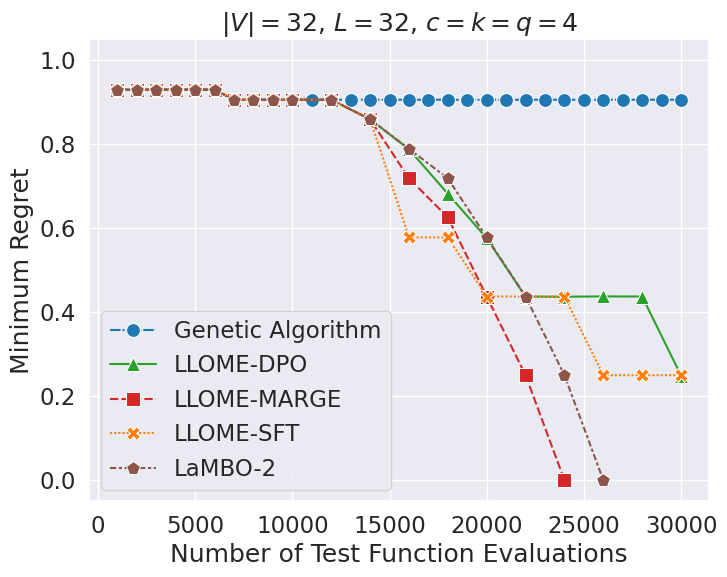}
        \caption{\textbf{Ehr(32, 32)-4-4-4} \textit{i.e.} $f_2$\label{fig:solver_regret_comparison_main_text_f2}}
        
    \end{subfigure}\hfill
    \begin{subfigure}{0.24\linewidth}
        \includegraphics[width=\linewidth]{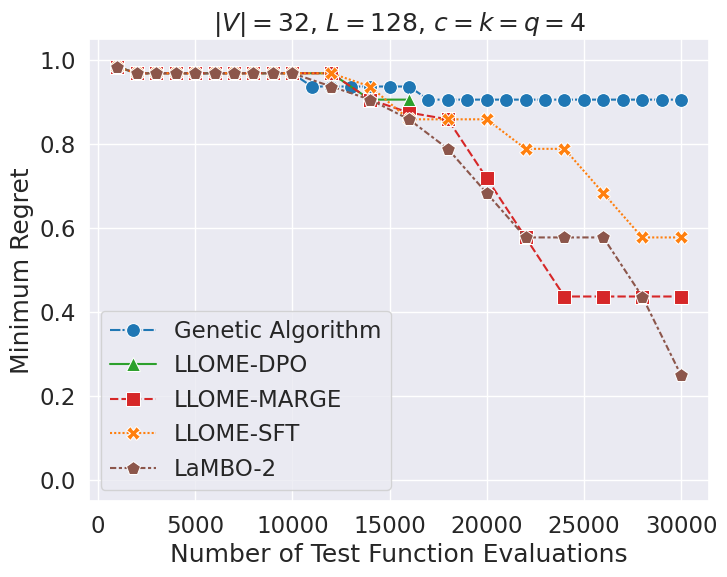}
        \caption{\textbf{Ehr(32, 128)-4-4-4} \textit{i.e.} $f_3$}
    \end{subfigure}\hfill
    \begin{subfigure}{0.24\linewidth}
        \includegraphics[width=\linewidth]{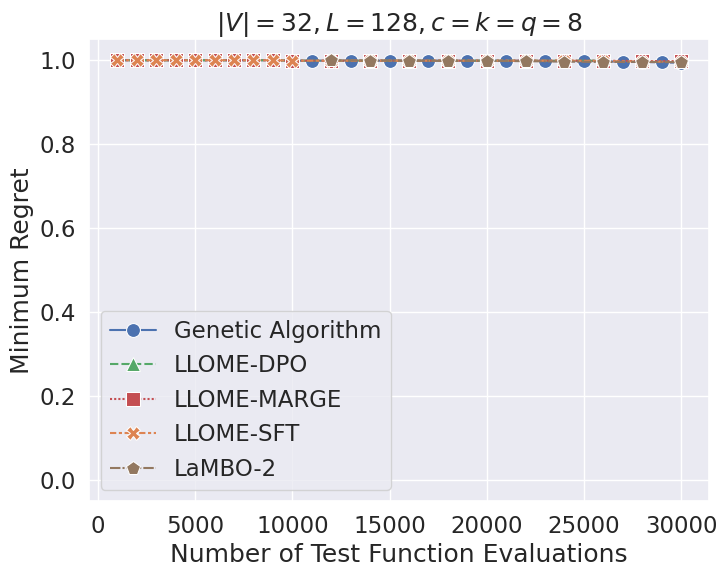}
        \caption{\textbf{Ehr(32, 128)-8-8-8} \textit{i.e.} $f_4$}
    \end{subfigure}
    \caption{\textbf{On medium-difficulty tasks, \textsc{LLOME-MargE} and \textsc{LaMBO-2} outperform other methods. On very easy or difficult tasks, all methods perform comparably.}
        We show minimum simple regret achieved as a function of the number of test function evaluations for Ehrlich functions $f_1$, $f_2$, $f_3$, and $f_4$. Some lines end early due to early convergence to an optimum or generator collapse. The first 10K test function evaluations of every method correspond to the solutions produced by the GA pre-solver.
    }
    \label{fig:solver_regret_comparison_main_text}
\end{figure*}
\begin{figure*}
\vspace{-4mm}
    \centering
    \includegraphics[width=0.8\linewidth]{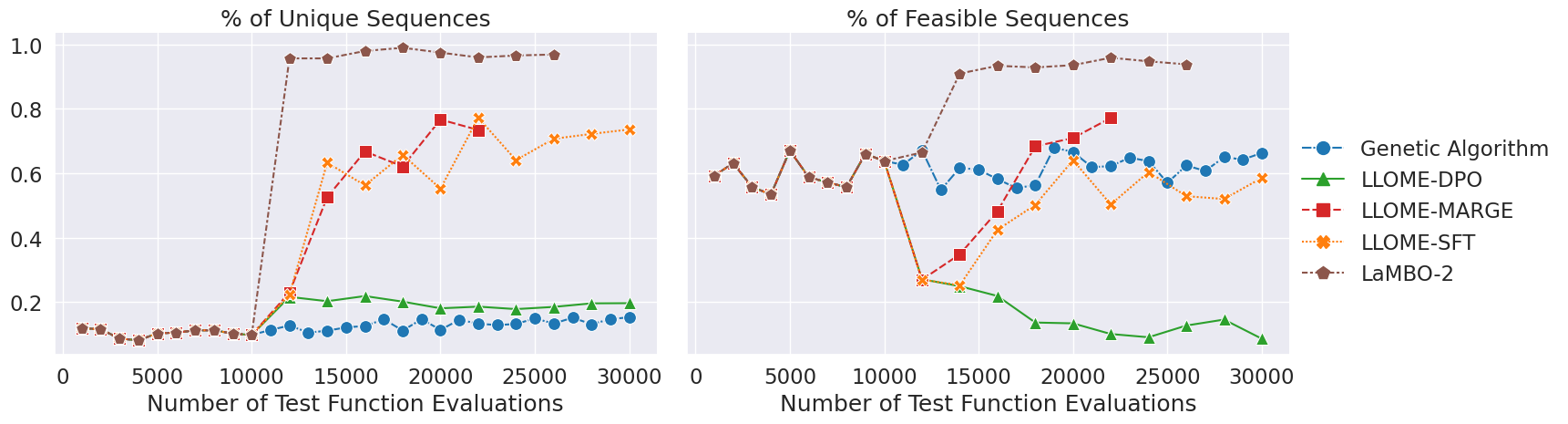}
    \caption{\textbf{\textsc{LaMBO-2} produces the most diverse and feasible solutions but \textbf{\textsc{Llome-MargE} balances diversity with accuracy}.} 
        The percentage of generated sequences for $f_2$ that are unique or feasible. Although \textsc{LaMBO-2} achieves the most diverse and feasible solutions, \textsc{Llome-MargE} balances diversity with regret (Fig. \ref{fig:solver_regret_comparison_main_text_f2}). The lines for \textsc{Llome-MargE} and \textsc{LaMBO-2} end early because both discover the optimal solution early.
    }
    \label{fig:solver_diagnostics_main_text}
\end{figure*}

\subsection{Defining an Ehrlich Function}
We provide a complete description of Ehrlich functions in Appendix \ref{app:ehrlich-fns}.
 In brief, given a token vocabulary $\mathcal{V}$ and the set of sequences $\mathcal{V}^L$ formed of concatenations of $L$ tokens in $\mathcal{V}$, we first define $\mathcal{F}\subset\mathcal{V}^L$, the set of \emph{feasible sequences}. $\mathcal{F}$ is defined as the support of a discrete Markov process (DMP), more details of which are given in Appendix \ref{app:ehrlich-fns}. We also define a set of $c$ \emph{spaced motifs} that represent biophysical constraints in specific regions of a sequence. These motifs are expressed as pairs of vectors $\{(\mathbf{m}^{(1)},\mathbf{s}^{(1)}),\cdots,(\mathbf{m}^{(c)},\mathbf{s}^{(c)})\}$ where $\mathbf{m}^{(i)}\in\mathcal{V}^k$, $\mathbf{s}^{(i)}\in\mathbb{Z}_+^k$, and $k\leq L // c$. An Ehrlich function $f$ then describes the degree to which a sequence $\mathbf{x}\in\mathcal{V}^L$ is feasible and possesses all motifs, modulated by a response function $g$. For $q\in [1,k]$ bits of precision, $f$ is expressed as:
\begin{equation}
    f(x)=\begin{cases}
        \prod_{i=1}^c g \circ h_q(\mathbf{x}, \mathbf{m}^{(i)}, \mathbf{s}^{(i)}) & \text{if }\mathbf{x}\in\mathcal{F}, \\
        -\infty & \text{otherwise}.
    \end{cases}
\end{equation}
The function $h_q$ defines the degree to which $\mathbf{x}$ fulfills a given constraint $(\mathbf{m},\mathbf{s})$, and is defined as follows:
\begin{equation*}
    h_q(\mathbf{x}, \mathbf{m}, \mathbf{s}) = \frac{1}{q}\bigg(\underset{\ell<L}{\max} \sum_{j=1}^k\mathds{1}\left[x_{\ell+s_j}=m_j\right] // \frac{k}{q}\bigg).
\end{equation*}
Setting $q=k$ corresponds to a dense signal which increments $h_q(\mathbf{x}, \mathbf{m}^{(i)}, \mathbf{s}^{(i)})$ whenever an additional element of any motif has been fulfilled. We provide additional details in Appendix \ref{app:ehrlich-fns} about how to ensure that all motifs are jointly satisfiable and that there exists at least one feasible solution that attains the optimal value of 1.0. We also provide further evidence in Appendix \ref{app:ehrlich-fns} that $f$ is difficult to optimize with a GA, even with small $L$, $k$, and $c$ values. 

\subsection{Experimental Setup} 
We evaluate \textsc{Llome}, \textsc{LaMBO-2}, and the GA on four Ehrlich functions with varying parameters and difficulties.
To avoid potential confusion between different test functions, we propose the following naming convention: \textbf{Ehr($\mathbf{|\mathcal{V}|}, L$)-$c$-$k$-$q$}. 
The test functions we consider are as follows:
\begin{itemize}[noitemsep,nolistsep]
    \item $f_1$: \textbf{Ehr(8, 128)-8-8-8} (easy)
    \item $f_2$: \textbf{Ehr(32, 32)-4-4-4} (medium)
    \item $f_3$: \textbf{Ehr(32, 128)-4-4-4} (medium)
    \item $f_4$: \textbf{Ehr(32, 128)-8-8-8} (difficult)
\end{itemize}
We tune hyperparameters for only a single test function ($f_2$) and carry the best configuration across to all functions (more details provided in \ref{app:llm-training-details} and \ref{app:lambo2}). See the Appendix (Fig. \ref{fig:min_regret_v32_d32_c4_k4_q2}) for results on a test function with $q < k$.

\textbf{\textsc{Llome} Details} We compare the performance of three different variants of $\textsc{Llome}$ (\textsc{Llome-SFT}, \textsc{Llome-MargE}, \textsc{Llome-DPO}) against that of a GA. The details of the genetic algorithm are given in Appendix \ref{app:ga} and the training details for SFT, DPO, and MargE are given in Appendix \ref{app:llm-training-details}. Each $\textsc{Llome}$ variant is seeded with data from 10 rounds of the GA (\emph{i.e.}, $n_0=10$). For \textsc{Llome-MargE} and \textsc{Llome-DPO}, one round of SFT is trained before continuing with MargE and DPO training in future iterations. All three variants use the pre-trained model \textsc{Pythia 2.8B} \citep{biderman2023pythia} as the base model. During the \textsc{Train} step of each iteration of \textsc{Llome} (step \ref{alg:blo:train} of Alg. \ref{alg:blo}), we train the current checkpoint for one epoch. Lastly, we use $j=2000$ test function evaluations per iteration of \textsc{Llome}.

\textbf{\textsc{LaMBO-2} Details} We also compare the performance of \textsc{Llome} against \textsc{LaMBO-2}. LaMBO-2  is a black box optimization algorithm tailored to protein sequence design with wet lab validation in real antibody lead optimization settings \cite{gruver2024protein}. 
We instantiate LaMBO-2 to jointly train an encoder shared between a generative discrete diffusion head and discriminative heads, which guide generation via their predictions of the reward of a sequence and whether it satisfies the problem constraints. We use a CNN architecture and train the model from random initialization (for hyperparameter details see Appendix~\ref{app:lambo2}). In the experiments presented below, the LaMBO-2 model has a total of 314K parameters. To compare directly with LLOME, we seed LaMBO-2 with the same 10 rounds of GA designs and use $j=2000$ test function evaluations per iteration. 

\textbf{Metrics} 
We assess performance on Ehrlich function benchmarks with the simple regret metric: $\mathrm{regret}(x)=1-f(x)$. We primarily show results on minimum simple regret over all optimization iterates, i.e. $\min_{i} \mathrm{regret}(x_i)$, and the average regret of sequences in a given round. When applicable, we also evaluate each method's reward (Eq. \ref{eqn:reward-fn}), \emph{i.e.}, how much the model's output improves over the input.

\section{Results}

\subsection{Solver Benchmark Results}\label{sec:solver-benchmark-results}
\textbf{State-of-the-art LLMs struggle to solve Ehrlich functions even with full problem specification.} As an initial test, we prompted OpenAI's \texttt{o1} (accessed on Jan. 28, 2025) and Google's Gemini 2 Flash Thinking (named `Gemini 2.0 Flash Thinking Experimental 01-21' in Google AI Studio) models with a description of $f_2$, including the full transition matrix, all constraints, and a few in-context examples. (See prompt in \ref{app:prompt}.) With only 8 samples, \texttt{o1} and Gemini Flash achieve minimum regrets of 0.9375 and 0, respectively. However, most outputs were either infeasible or had regret close to 1, indicating that this is still a challenging problem for the LLMs, even full problem specification. In the rest of our experiments, we provide models only with pairs of sequence $\mathbf{x}$ and score $f(\mathbf{x})$.

\textbf{\textsc{Llome-MargE} can perform better than or comparable to specialized models in designing sequences under oracle label budget constraints.} In Figure~\ref{fig:solver_regret_comparison_main_text}, we plot the minimum regret achieved by each method as a function of the number of test function evaluations, under four different Ehrlich test functions with varying difficulty. First, we find that test function difficulty is essential for highlighting statistically significant differences between methods. On easy ($f_1$) or difficult ($f_4$) tasks, all methods achieve comparable performance, either finding the optimal solution very quickly or making virtually no progress. In contrast, on the medium difficulty test functions $f_2$ and $f_3$, performance is more differentiated. On $f_2$ \textsc{Llome-MargE} performs the best, whereas $\textsc{LaMBO-2}$ performs the best on $f_3$. The other methods -- \textsc{Llome-SFT}, \textsc{Llome-DPO}, \textsc{LaMBO-2}, and the baseline GA -- lag behind. Among the LLOME variants, \textsc{Llome-SFT} and \textsc{Llome-MargE} achieve significantly higher rewards than \textsc{Llome-DPO} (Figs. \ref{fig:max-reward}, \ref{fig:avg-reward}). Test function $f_2$ (Figure~\ref{fig:solver_regret_comparison_main_text}) is particularly interesting as a case study, since the different methods have well-separated regret curves. We thus focus on $f_2$ for further analysis. 

\begin{figure*}[ht!]
    \centering
    \begin{subfigure}{0.31\linewidth}
        \includegraphics[width=\linewidth]{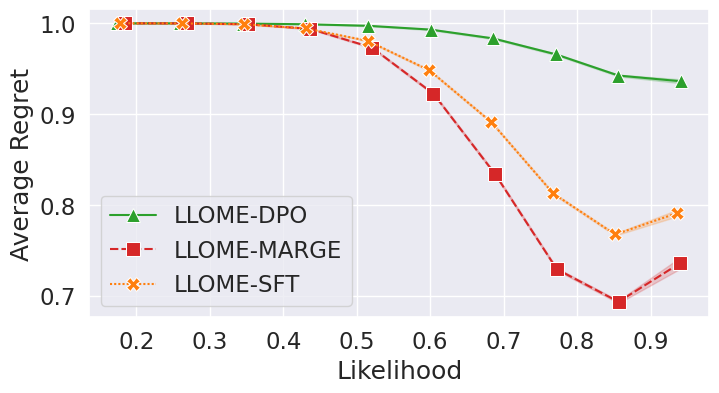}
        \caption{Likelihood vs. regret.}
        \label{fig:calibration-regret}
    \end{subfigure}\hfill
    \begin{subfigure}{0.31\linewidth}
        \includegraphics[width=\linewidth]{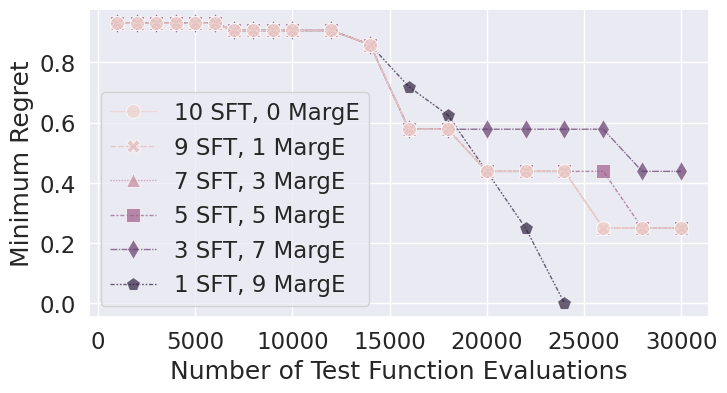}
        \caption{Combining SFT with MargE.}
        \label{fig:sft-marge}
    \end{subfigure}\hfill
    \begin{subfigure}{0.31\linewidth}
        \includegraphics[width=\linewidth]{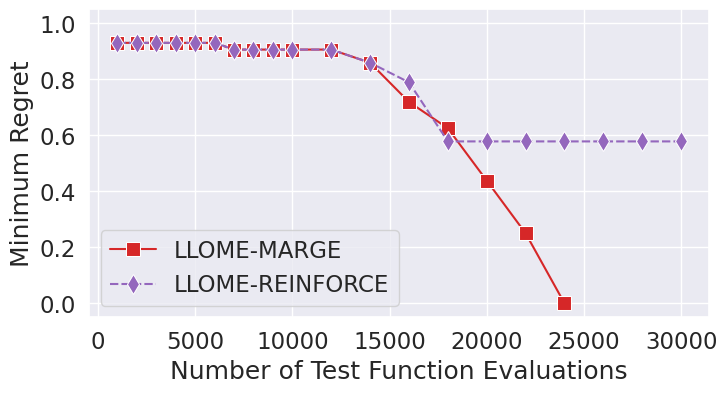}
        \caption{MargE vs. REINFORCE.}
        \label{fig:reinforce}
    \end{subfigure}
    \caption{We examine the design principles behind MargE, including calibration of likelihood vs. regret (\ref{fig:calibration-regret}), the importance of ground truth rewards (\ref{fig:sft-marge}), and the difference between \textsc{MargE} and policy gradient methods like REINFORCE (\ref{fig:reinforce}).
    }
    \label{fig:understanding_llome_main_text}
\end{figure*}


\textbf{How do specialized models compare with generalist models?}
At first glance, Figure~\ref{fig:solver_regret_comparison_main_text} seems to show that generalist models like LLMs can outperform specialized models like \textsc{LaMBO-2}. However, one factor hidden in these plots is the computational cost of different methods. For relatively easy optimization problems, since the performance of various methods is similar, using a specialized model with 0.01\% of the parameters of an LLM may be more practical. In addition, \textsc{LaMBO-2} offers precise control over the sequence design space (through specifying number of tokens to mutate and the desired maximum edit distance). This allows domain experts to tune the algorithm with appropriate hyperparameters. Nonetheless, \textsc{LaMBO-2} is sensitive to hyperparameter choice and requires custom tuning for each test function, whereas \textsc{LLOME} performs well across multiple functions without custom tuning. In Figs.~\ref{fig:solver_diagnostics_main_text} and \ref{fig:solver_diagnostics_appendix} we plot the diversity and the feasibility of each method's designs over the course of optimization. We see that \textsc{Llome-MargE} strikes a careful balance of exploration and exploitation without custom tuning. However, \textsc{LaMBO-2} proposes more diverse and feasible designs than \textsc{Llome-MargE}, but requires custom tuning for each test function.



\textbf{SFT- and MargE-trained LLMs generate unique and feasible sequences, but DPO-trained LLMs struggle.} 
Successfully solving a highly-constrained optimization problem requires that the model be able to generate a diverse array of feasible sequences. Compared to the GA, Figures \ref{fig:solver_diagnostics_main_text} and \ref{fig:solver_diagnostics_appendix} show that \textsc{LaMBO-2}, \textsc{Llome-MargE}, and \textsc{Llome-SFT} produce significantly higher proportions of unique sequences. However, these three methods also experience an initial dip in the percentage of feasible outputs before learning to satisfy their constraints. In contrast, \textsc{Llome-DPO} does not improve either the diversity or the feasibility of its outputs even when provided with more oracle labels. In some of our experiments, the \textsc{Llome-DPO} pipeline ends prematurely due to producing an insufficient number of unique sequences to seed the next round. Like other past works \citep{pal2024smaugfixingfailuremodes,rafailov2024qfunction,feng2024analyzingunderstandinglimitationsdpo,pang2024iterativereasoningpreferenceoptimization}, we observe that the likelihood assigned by the DPO-trained LLM to both $y_w$ and $y_l$ continuously declines throughout training, implying that probability mass is moved to sequences outside of the training distribution. Since the percentage of infeasible sequences generated by \textsc{Llome-DPO} increases over multiple iterations, it is likely that DPO moves some probability mass to infeasible regions of the sample space. As such, DPO may be ill-suited for training solvers for constrained problems.

Although LLMs are capable of generating new feasible sequences, we also find that they suffer from the classic explore versus exploit tradeoff. When the LLM makes a larger edit to the input sequence, the output is less likely to be feasible (Fig. \ref{fig:hamming-distance-vs-feasible}). For both \textsc{Llome-SFT} and \textsc{Llome-MargE}, an edit larger than 0.3 Hamming distance away from the input is less than 20\% likely to be feasible. Since the $\Delta x$ threshold we set for the PropEn dataset formatting algorithm (Alg. \ref{alg:dataset-formatting}) is 0.25, this is unsurprising. The LLM is never been trained on edits of larger than 0.25 Hamming distance. Overall, \textsc{Llome-MargE} exhibits the best tradeoff of all methods, producing the largest proportion of unique and feasible sequences for the smallest budget of test function evaluations and for moderately sized edits of Hamming distance $\leq 0.3$ (Fig. \ref{fig:hamming-distance-vs-feasible}). 

\subsection{Understanding/Ablating LLOME}
Here we summarize various aspects of LLOME's performance, deferring sections on LLOME's test-time extrapolation abilities and sensitivity to changes in hyperparameters, evaluation budget, and presolver to App. \ref{app:additional-results}.

\textbf{LLMs are moderately effective at ranking their own outputs.} The iterative refinement process requires that the LLM rank and filter its own outputs (Alg. \ref{alg:filter}), but we have not yet considered how effective the LLM is at selecting the best candidates. When compared to oracle selection (\emph{i.e.} using the ground-truth score to select candidates), we find that the likelihood method often selects high-scoring but not the \emph{highest} scoring candidates (Fig. \ref{fig:candidate-selection}). To better understand this gap between likelihood and oracle selection, we plot the calibration curve of regret versus model likelihood in Figure~\ref{fig:calibration-regret}. A perfectly-calibrated model would show a steep linear trend. The \textsc{Llome-DPO} calibration curve monotonically decreases with regret, but also fails to generalize, producing the fewest sequences with regret outside of the training distribution. In contrast, \textsc{Llome-MargE} and \textsc{Llome-SFT} assign higher likelihoods to lower regret sequences, but also exhibit some degree of overconfidence. Indeed, Fig.~\ref{fig:calibration-reward} shows that the likelihood and reward of SFT- and MargE-generated sequences are \emph{inversely} correlated for high likelihoods exceeding 0.7. We hypothesize that LLMs may become increasingly miscalibrated as their outputs extend beyond the training distribution.

\textbf{When is explicit reward information required during training?}
Since we have observed that \textsc{Llome-SFT} and \textsc{Llome-MargE} perform similarly up to a certain point (Fig. \ref{fig:solver_regret_comparison_main_text}), we might hypothesize that incorporating explicit reward values into the training objective is only necessary once the LLM is closer to the optimum. Since MargE requires a larger memory footprint than SFT (due to the need to store and compute likelihoods with both $\pi_\theta$ and $\pi_\text{Ref}$) and collecting ground truth rewards may be expensive, training the LLM with further rounds of SFT before switching to MargE would be more efficient than relying mostly on MargE training. We test this hypothesis via a multi-stage pipeline, where early rounds of \textsc{Llome} use SFT training, and later rounds use MargE. We keep the total number of LLM training rounds constant at 10 (unless the LLM finds the optimal solution earlier), evaluate on test function $f_2$, and vary the proportion of rounds that use SFT versus MargE, as shown in Fig. \ref{fig:sft-marge}. We refer to the pipeline with $i$ rounds of SFT training followed by $j$ rounds of MargE training as \textsc{Llome-SFT$_i$-MargE$_j$}. We find that \textsc{Llome-SFT$_1$-MargE$_9$} not only is the sole pipeline to find the optimal solution (Fig. \ref{fig:sft-marge}), but also achieves the best calibration curve (Fig. \ref{fig:sft-marge-calibration-regret}).
In contrast, switching from SFT to MargE training at an intermediate point results in the worst performance. It appears that SFT and MargE may have conflicting loss landscapes -- if we first train for a few rounds with the SFT loss, subsequently switching to MargE training impedes the LLM's progress.


\textbf{Does fulfilling the Strong Interpolation Criteria (SIC) matter for LLM training?} 
The drawbacks of SFT, DPO, and RLHF (as discussed in Sections \ref{sec:background} and \ref{subsec:extended_related_work}) motivated the design of MargE as a training objective that (1) uses the ground truth reward, (2) is less complex than RLHF, (3) reinforces the probability of high-reward outputs, (4) does not continuously increase output lengths, and (5) fulfills SIC \citep{hu2024new}. While criteria (1)-(4) have been discussed in prior sections, it remains to be seen whether (5) is indeed integral for training an LLM to optimize well.

To evaluate the impact of the SIC criteria on LLM training, we compare \textsc{MargE} with REINFORCE \citep{Williams1992reinforce}, a policy gradient method that directly optimizes for maximal reward but does not fulfill SIC. We choose REINFORCE because it is similar in principle to MargE and has been shown to perform comparably \citep{ahmadian-etal-2024-back} to RLHF \citep{stiennon2020summarize,ziegler2019fine} and RL-free variants such as DPO \citep{rafailov2023direct} and RAFT \citep{dong2023raft}.  
The effect of SIC is such that MargE smoothly interpolates between $\pi^*$ and $\pi_\text{Ref}$, whereas the KL-regularized REINFORCE algorithm interpolates between $\pi^\delta$ (a degenerate reward-maximizing distribution) and $\pi_\text{Ref}$, thereby violating SIC. 
In theory, moving the policy towards $\pi^\delta$ encourages collapse, which is undesirable in an online optimization setting with finite data, where future rounds may require further exploration. 
We apply the same importance sampling and self-normalization from the MargE loss to the KL-regularized REINFORCE loss:
\begin{equation*}
    \underset{\substack{\rvx\sim\mathbb{D}_\rvx,\\ \rvy\sim \pi_\text{Ref}(\cdot\mid \rvx)}}{\mathbb{E}} \bigg[ \nonumber -\frac{\pi_\theta(\rvy|\rvx)}{\pi_\text{Ref}(\rvy|\rvx)} r(\rvx,\rvy)\log\pi_\theta(\rvy|\rvx)  - \lambda \frac{\log \pi_\theta(\rvy|\rvx)}{|\rvy|} \bigg]
\end{equation*}
 
Fig. \ref{fig:reinforce} shows that although \textsc{Llome-REINFORCE} decreases the regret to some degree, it plateaus early and does not reach optimality, suggesting that the SIC criteria has meaningful influence on the LLM's optimization abilities.

\section{Conclusions}
Our work is a response to the lack of both non-trivial synthetic benchmarks for biophysical sequence optimizers and rigorous analyses of how LLMs perform on these highly-constrained optimization tasks in realistic settings. Our proposed test functions bear significant geometric similarities to real biophysical sequence optimization tasks and allow for rapid iteration cycles. 
In addition, although a wealth of work exists that adapts LLMs for biophysical tasks, few have studied the abilities of LLMs to adhere to hard constraints in realistic bi-level optimization settings. To that end, we propose and analyze \textsc{Llome}, a framework for incorporating LLMs in bilevel optimization algorithms for highly constrained discrete sequence optimization problems. We show that in some settings, \textsc{Llome} discovers lower-regret solutions than \textsc{LaMBO-2} and a GA, even with very few test function evaluations. When combined with MargE training, \textsc{Llome} is significantly more sample efficient than \textsc{Llome-SFT} or \textsc{Llome-DPO}, demonstrating its potential to be useful in data-sparse lab settings. However, in very easy or difficult tasks, specialized models have the advantage -- they offer comparable regret for greater steerability and significantly lower computational cost.
Our findings also highlight that LLMs can robustly extrapolate beyond their training data, but are occasionally miscalibrated and benefit from training with ground truth rewards.

\newpage
\section*{Impact Statement}

This paper presents work whose goal is to advance the field of 
Machine Learning. There are many potential societal consequences 
of our work, none which we feel must be specifically highlighted here.

\section*{Acknowledgements}
We thank Natasa Tagasovska and the Prescient Design LLM team for valuable discussions and feedback during the development of this project. We additionally thank Miguel González-Duque for support with incorporating the LaMBO-2 method into the \href{https://github.com/MachineLearningLifeScience/poli-baselines}{\texttt{poli-baselines}} repository and for continued maintenance of \href{https://github.com/MachineLearningLifeScience/poli-baselines}{\texttt{poli-baselines}}. The authors also thank the Prescient Design Engineering team for providing HPC and consultation resources that contributed to the research results reported within this paper.

\bibliography{icml_2025}
\bibliographystyle{icml2025}

\newpage
\appendix
\onecolumn
\section{Appendix}


\subsection{Extended Related Work}
\label{subsec:extended_related_work}

\paragraph{LLMs for Optimization and Scientific Discovery} 
Much of the work on LLMs for optimization has been inspired by the design of classic black-box optimizers (BBO) such as genetic algorithms, bayesian optimizers, and random search methods. BBOs are characterized by a lack of information about the true objective function. Instead, only inputs and their corresponding objective values are provided to the optimizer, with no gradient or priors about the objective. Since these optimization problems are often expressible in formal mathematical, logical, or symbolic terms, many initial attempts at LLMs for optimization used the LLMs to first translate a natural language description of the problem into code or a modeling language, prior to passing this formalization into an auxiliary solver \citep{ramamonjison2022nl4opt,ahmed2024lm4optunveilingpotentiallarge,mittal2024puzzlebenchllmssolvechallenging,ahmaditeshnizi2024optimus}. This approach was often quite effective, as it utilized the wealth of past work on LLMs for named entity recognition \citep[; NER]{amin-neumann-2021-t2ner,ushio-camacho-collados-2021-ner,wang2023gptnernamedentityrecognition,yan2019teneradaptingtransformerencoder,arkhipov-etal-2019-tuning}, semantic parsing \citep{drozdov2023compositional,gupta2018semantic,shaw-etal-2019-generating,shao2020graph,rongali2020don,shi2021learning,chen2022cerberus,stengel2021joint}, and code generation \citep{chen2021evaluating,nijkamp2022codegen,nijkamp2023codegen2,zhang2023planning,poesia2022synchromesh}. The NL4OPT competition \citep{ramamonjison2022nl4opt}, for example, decomposed LLMs-for-BBO into two subtasks: (1) the translation of a natural language description into a graph of entities and relations between the entities, and (2) a formalization of the optimization problem into a canonical logical reperesentation that could be solved by many commercial solvers. Winning approaches often employed ensemble learning \citep{he2022linearprogrammingwordproblems,wang2023opdnl4optensembleapproachner,ning2023novel,doan2022vtccnlpnl4optcompetitionsubtask}, adversarial learning \citep{wang2023opdnl4optensembleapproachner,ning2023novel}, and careful data pre-/post-processing and augmentation \citep{he2022linearprogrammingwordproblems,ning2023novel,jang2022tag}. Later work demonstrated that pre-trained LLMs like GPT-4 could achieve competitive performance on NLP4OPT without the NER stage \citep{ahmed2024lm4optunveilingpotentiallarge}, though the F1 score still trailed behind the state-of-the-art translation+NER approaches from the winning NL4OPT entries. Outside of NL4OPT, \citet{gao2022pal}, \citet{he2023solving}, \citet{ahmaditeshnizi2024optimus}, and \citet{mittal2024puzzlebenchllmssolvechallenging} use an LLM to formalize math, mixed integer linear programming, and combinatorial reasoning problems from a natural language description before offloading the solving to a Python interpreter or symbolic solver. Each work notes that the LLM performs better in this decomposed framework than through prompting alone.

Yet other approaches tackle LLMs-for-optimization directly with the LLM, without additional solvers. As \citet{song2024position} argues, LLMs offer both powerful in-context learning abilities and a flexible natural-language interface capable of expressing a wide variety of problems. Many techniques embed the LLM in an evolutionary algorithm, using the LLM as a mutation or crossover operator \citep{chen2024evoprompting,guo2024evoprompt,meyerson2024llmcrossover,lehman2023evolution,nasir2024llmatic,liu2024llmsevolution,lange2024llmsevolution,liu2024large,RomeraParedes2023funsearch}. In this setting, the LLM provides a diversity of samples while the evolutionary algorithm guides the search towards high-fitness regions. This strategy is a form of bi-level optimization, in which two nested optimization problems (one nested within the other) are solved in alternation. It is common for the outer loop to optimize the model's parameters, and for the inner loop to optimize the model's outputs \citep{chen2022optformer,guo2024twooptimizersbetteronellm}. 
\citet{ma2024sga} combine this approach with feedback from physical simulations in the inner optimization loop to enable LLMs to complete various scientific optimization tasks, such as constitutive law discovery and designing molecules with specific quantum mechanical properties. Although their use of an LLM in a bi-level optimization loop is similar to ours, they directly train their parameters using differentiable simulations whereas we do not assume access to the gradients of the ground truth rewards. Our work also explores various aspects of LLM training that allow the LLM to improve its optimization abilities despite not having access to ground-truth rewards in the inner loop. Lastly, other approaches avoid gradient optimization altogether and focus purely on prompt-based optimization, demonstrating success on diverse tasks such as hill-climbing \citep{guo2024towards}, Newton's method \citep{nie2023importance}, hyperparameter search \citep{zhang2023hypopt}, and prompt engineering \citep{khattab2024dspy,pryzant-etal-2023-automatic,yang2024opro}.

\paragraph{Controllable Text Generation}
Controllable text generation (CTG) is a special case of optimization. Rather than searching for sequences that maximize the objective function, the goal is to produce a sequence with a particular attribute value (\emph{e.g.} having a certain number of words, a more positive sentiment than the input, or a particular biological motif). In some aspects, this may be an easier task -- the optimizer need not generate an optimal sequence that is likely outside the distribution of its training data. In others, this may also be more difficult. Targeting particular attribute values requires precise knowledge of the shape of the entire objective function, rather than only the neighborhood of the optima. Although LLM prompting is in itself considered a form of CTG \citep{radford2019language,brown2020gpt3}, prompting alone tends to offer better control for more open-ended, higher level instructions (\emph{e.g.} ``write a story in the style of Shakespeare") than for fine-grained constraints (\emph{e.g.} ``rewrite this sentence using only 10 words") \citep{carlsson-etal-2022-fine}.

One common approach is to use \emph{control codes}, or unique strings pre-pended in front of a training example that indicates which attribute is represented in the example's target \citep{keskarCTRL2019,padmakumar2023extrapolative,raffel2020t5,Madani2023}. This approach is typically less generalizable to new attributes or instructions, due to the need to re-train the model. However, more recent work has shown that LLMs are capable of learning to use these control codes \emph{in-context} \citep{zheng-etal-2023-toward,pmlr-v202-zhou23g}, simplifying the process by which new attributes can be introduced. A popular alternative technique is inference-time guidance, which uses auxiliary tools \citep{pascual-etal-2021-plug-play} or models \citep{Dathathri2020Plug,liu-etal-2021-dexperts,deng-raffel-2023-reward,dekoninck2024controlled} to guide the LLM decoding process.

\textbf{Existing Benchmarks in Biophysical Domains} There are a few notable efforts to improve the state of sequence optimization benchmarks for biophysical domains.

\textbf{Small Molecules} In the small molecule domain, GuacaMol \citep{brown2019guacamol} and the Therapeutics Data Commons (TDC) \citep{huang2021therapeutics} include simulation-based test functions for small molecule generation/optimization benchmarking.
As we discussed in the main text, simulation-based test functions have significant barriers to entry ranging from computational resource requirements to software engineering concerns such as dependency management.
If these simulations were in fact well-characterized, high-fidelity proxies for real molecule design objectives then these objections could be resolved, however at the time of writing it is difficult to determine 1) when a simulated task is ``solved'' and 2) what constraints are required to prevent ML methods from ``hacking'' the simulation and 3) to what degree simulation scores correspond at all to actual experimental feedback.
Indeed, one could argue that if real molecule design objectives were sufficiently well-understood to characterize via simulation then the most effective approach to ML-augmented molecule design would be to simply approximate and accelerate those simulations rather than directly model experimental feedback.

\textbf{Large Molecules} In the large molecule domain, ProteinGym \citep{NEURIPS2023_cac723e5} assembles a collection of protein datasets and model baselines but is primarily targeted at evaluating offline generalization with a fixed dataset.
The models from this benchmark could be used as ``deep fitness landscapes'' (i.e. an empirical function approximation optimization benchmark), with the corresponding limitations we discussed in the main text.
Our work is most closely related to the FLEXS benchmarking suite \citep{sinai2020adalead}.\footnote{\url{https://github.com/samsinai/FLEXS}}
To our knowledge, FLEXS is the most comprehensive attempt to date to assemble a robust suite of benchmarks for large molecule sequence optimization, with benchmarks for DNA, RNA, and protein sequences from an array of combinatorially complete database lookups, empirical function approximators, and physics simulators.
The Rough Mt. Fuji model (see below) is the only closed-form test function in the FLEXS suite. It is additive and hence trivial to model and solve. Hence our contribution can be seen as augmenting existing benchmark suites with test functions that are geometrically similar to real sequence optimization problems and also easy to install and cheap to evaluate.

\textbf{Models of Sequence Fitness in Theoretical Biology}

Geneticists have proposed theoretical models of biophysical sequence fitness and the geometry induced by random mutation and selection pressure, notably the mutational landscape model from \citet{gillespie2004population}, with more recent variants including the Rough Mt. Fuji model from \citet{Neidhart_2014}.
These models are interesting objects of study, however those models assume mutational effects are either independent or additive, which disagrees with the correlated non-additive structure we observe empirically.
These models also do not account for ``fitness cliffs'' (i.e. expression constraints that are highly sensitive to local perturbation and determine whether function is possible to observe experimentally).
We implemented the Rough Mt. Fuji model as an additional test function and verified that a genetic algorithm can easily optimize it.
Ehrlich functions can be seen as a constrained, non-additive mutational fitness landscape, and may be interesting objects for further theoretical analysis.

\clearpage

\subsection{Ehrlich Functions}\label{app:ehrlich-fns}

\subsubsection{Motivation for Ehrlich functions}
\label{subsec:app-ehrlich-motivation}
Rigorous benchmarking is an essential element of good practice in science and engineering, allowing developers to evaluate new ideas rapidly in a low-stakes environment and thoroughly understand the strengths and weaknesses of their methods before applying them in costly, consequential settings.

While there has been a surge of investment into ML algorithms for applications like drug discovery, good benchmarks for those algorithms have proven elusive \citep{tripp2021a,stanton2022accelerating}.
Experimental feedback cycles in the life and physical sciences require expensive equipment, trained lab technicians, and can take months or even years.
ML researchers require rapid feedback cycles, typically measured in minutes, necessitating proxy measures of success.

This need is particularly acute when evaluating black-box sequence optimization algorithms, which must produce a sequence of discrete states that optimizes a ground truth function.
Unlike typical ML benchmarks for supervised and unsupervised models, optimization algorithms cannot be evaluated with a static dataset unless the search space is small enough to be exhaustively enumerated and annotated with the test function.
Many researchers turn to simulation or empirical function approximation to provide test functions for larger, more realistic search spaces, however there is always a compromise between the highest possible fidelity (which may still be quite imperfect) and acceptable latency for rapid development.

We posit that well-designed synthetic (i.e. closed-form) test functions present many advantages as biomolecular design benchmarks, compared to simulations or empirical function approximations. First, we note that synthetic test functions have been universally used to test continuous optimization algorithms for decades \citep{molga2005test}. Second, we note that optimization algorithms are designed to solve \textit{any} test function belonging to a certain class. 
For example, gradient descent provably converges (under suitable assumptions) to the global optimum of any differentiable convex function.
The key observation is that a test function need not correlate at all with downstream applications as long as there is shared structure (i.e. geometry). Thus the design principles we adopted to create Ehrlich functions were:

\begin{itemize}
    \item Low costs/barriers to entry \textemdash a good benchmark should
be inexpensive and easy to use.
    \item Well-characterized solutions \textemdash It should be easy to tell
if a benchmark is “solved”. Incremental progress towards
better solutions should be reflected in the benchmark score.
    \item Non-trivial difficulty \textemdash a good benchmark should be challenging enough to motivate and validate algorithmic improvements. It should not be possible to solve with na\"ive baselines on a tiny resource budget.
    \item Similarity to real applications \textemdash while benchmarks inevitably require some simplification, a good benchmark should retain key characteristics of the desired application in a stylized, abstracted sense, otherwise the benchmark will not be relevant to the research community.
\end{itemize}

\subsubsection{Limitations of Existing Sequence Optimization Benchmarks}
With these criteria in mind, we categorize existing types
of biophysical sequence optimization benchmarks and describe their limitations.

\textbf{Database Lookups}
Database lookup test functions are constructed at substantial cost by exhaustively enumerating a search space and associating each element with a measurement of some objective, sometimes requiring large interdisciplinary teams of experimentalists and computationalists \citep{barrera2016survey, wu2016adaptation, ogden2019comprehensive, mason2021optimization, chinery2024baselining}.
Unfortunately this approach necessarily restricts the search space, and the correctness of the database itself cannot be completely verified without repeating the entire experiment.

\textbf{Empirical Function Approximation}
Empirical function approximation benchmarks are related to database lookups since they start from an (incomplete) database of inputs and corresponding measurements.
This type of test function returns an estimate from a statistical model trained to approximate the function that produced the available data (e.g. hidden Markov model sequence likelihoods, protein structure models, or ``deep fitness landscapes'') \citep{sarkisyan2016local, tape2019, angermueller2020population, wang2022scaffolding, verkuil2022language, xu2022peer, NEURIPS2023_cac723e5, hie2024efficient}.
Unfortunately empirical approximation is only reliable locally around points in the underlying dataset, and it is difficult to characterize exactly over which region of the search space the estimates can be trusted.
As a result, blindly optimizing empirical function approximators often reveals an abundance of spurious optima that are easy to find but not reflective of the solutions we want for the actual problem \citep{tripp2021a, stanton2022accelerating, gruver2024protein}.

\textbf{Physics-Based Simulations}
Simulations are a very popular style of benchmark, but current options all violate different requirements of a good benchmark.
Most simulations are slow to evaluate, many are difficult to install, some require expert knowledge to run correctly, and yet still in the end simulations can admit trivial solutions that score well but are not actually desirable.
For example, docking models have been proposed as test functions \citep{cieplinski2023generative}, but they do not have well-characterized solutions and are easy to fit with deep networks \citep{graff2021accelerating}.
The primary appeal of simulations is a resemblance to real applications, however the resemblance can be superficial.
$\Delta \Delta$G simulations \citep{schymkowitz2005foldx, chaudhury2010pyrosetta} do not have a low barrier to entry, and yet the correlation of $\Delta \Delta$G with real objectives (e.g., experimental binding affinity) is generally modest or unproven \citep{kellogg2011role, barlow2018flex, hummer2023investigating}.
Despite their difficulties, simulation benchmarks can be an important source of validation for mature methods for which we can justify the effort.
However the limitations of simulations makes them especially unsuited for rapid method development, leading us to explore other alternatives.

\textbf{Closed-Form Test Functions}
Closed-form functions have many appealing characteristics, including low cost, arbitrarily large search spaces, and amenability to analysis, however existing test functions for sequence optimization are so easy to solve that they are mostly used to catch major bugs. 
Simply put, designing a functioning protein is much, much harder than maximizing the count of beta sheet motifs (just one of many types of locally folded secondary structure elements in proteins)  \citep{gligorijevic2021function, gruver2024protein}.
The beta sheet test function highlights the main difficulty of defining closed-form benchmarks, namely not oversimplifying the problem to the point the benchmark becomes too detached from real problems.

\begin{figure*}[t]
    \centering
    \includegraphics[width=0.98\textwidth]{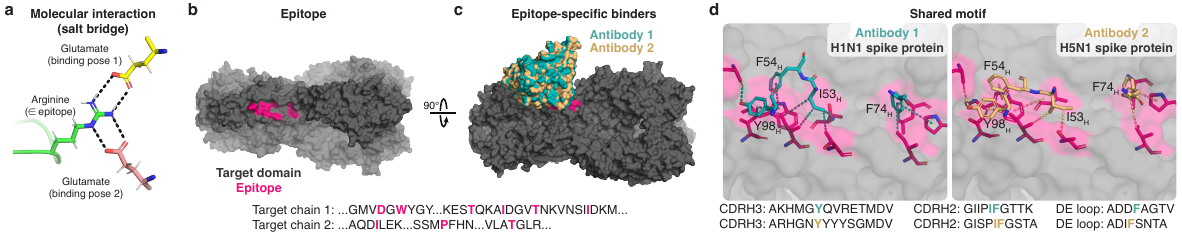}
    \caption{
        \textbf{(a)} Arginine and glutamate are complementary amino acids because they have a strong \textit{salt bridge} interaction.
        \textbf{(b - c)} Antibodies that bind to a specific region of a target protein (the \textit{epitope}) have many therapeutic and diagnostic uses.
        \textbf{(d)} Antibodies with different sequences can bind to the same epitope on two homologous proteins because they are structurally similar, which manifests as shared motifs in sequence space. Structures shown have RCSB codes 3gbn and 4fqi.
    }
    \label{fig:salt_bridge_illustration}
\end{figure*}

\subsubsection{Ehrlich Function Definition and Connection to Real Biophysical Sequence Design} \label{subsec:app-ehrlich-design}
Here we introduce Ehrlich functions and explain which specific aspects of real biophysical sequence design problems are captured by this function class, using antibody affinity maturation as a running example.

\paragraph{Uniform random draws in sequence space are unlikely to satisfy constraints.} One of the first challenges encountered in black-box biophysical sequence optimization is a constraint on which sequences can be successfully measured.
For example, chemical assays first require the reagents to be synthesized, and protein assays require the reagents be expressed by some expression system such as mammalian ovary cells.
Popular algorithms like Bayesian optimization often assume the search space can be queried uniformly at random to learn the general shape of the function.
Sequences of uniformly random amino acids generally do not fold into a well-defined structure and cannot be purified, meaning that we cannot measure anything about the objective function (e.g. binding affinity). Unfortunately constraints like protein expression cannot currently be characterized as a closed-form constraint on the sequence.
We simplify and abstract this feature of biophysical sequences with the notion of a feasible set of sequences $\mathcal{F}$ with non-zero probability under a discrete Markov process (DMP) with transition matrix $A \in \mathbb{R}_+^v \times \mathbb{R}_+^v$,
\begin{align}
    \mathcal{F} = \{ \mathbf x \in \mathcal{X} \mid A[x_{\ell-1}, x_\ell] > 0 \; \forall \ell \geq 2 \}, \label{eq:dmp_constraint}
\end{align}
where $\mathcal{X} = \mathcal{V}^L$ is the set of all sequences of length $L \geq 2$ that can be encoded with states $\mathcal{V}$, with $|\mathcal{V}| = v$.

Note that if sequences are drawn uniformly at random, then assuming at least one entry of $A$ is zero (i.e. at least one state transition is infeasible), we have
\begin{align*}
    \mathbb{P}[\mathbf x \notin \mathcal{F}] &\geq \sum_{\ell=1} ^ {L // 2} \left(1 - \frac{1}{v^2}\right)^\ell \frac{1}{v^2}, \\
    &= 1 - \left(1 - \frac{1}{v^2}\right)^{L // 2},
\end{align*}
where $//$ denotes integer division. 
If we choose $L$ large enough we will see uniform random draws fall outside $\mathcal{F}$ with high probability.
See Appendix \ref{subsec:transition_matrix_construction} for further details on our procedure to generate random ergodic transition matrices with infeasible transitions.

\begin{figure}[t]
    \centering
    \includegraphics[width=0.42\textwidth]{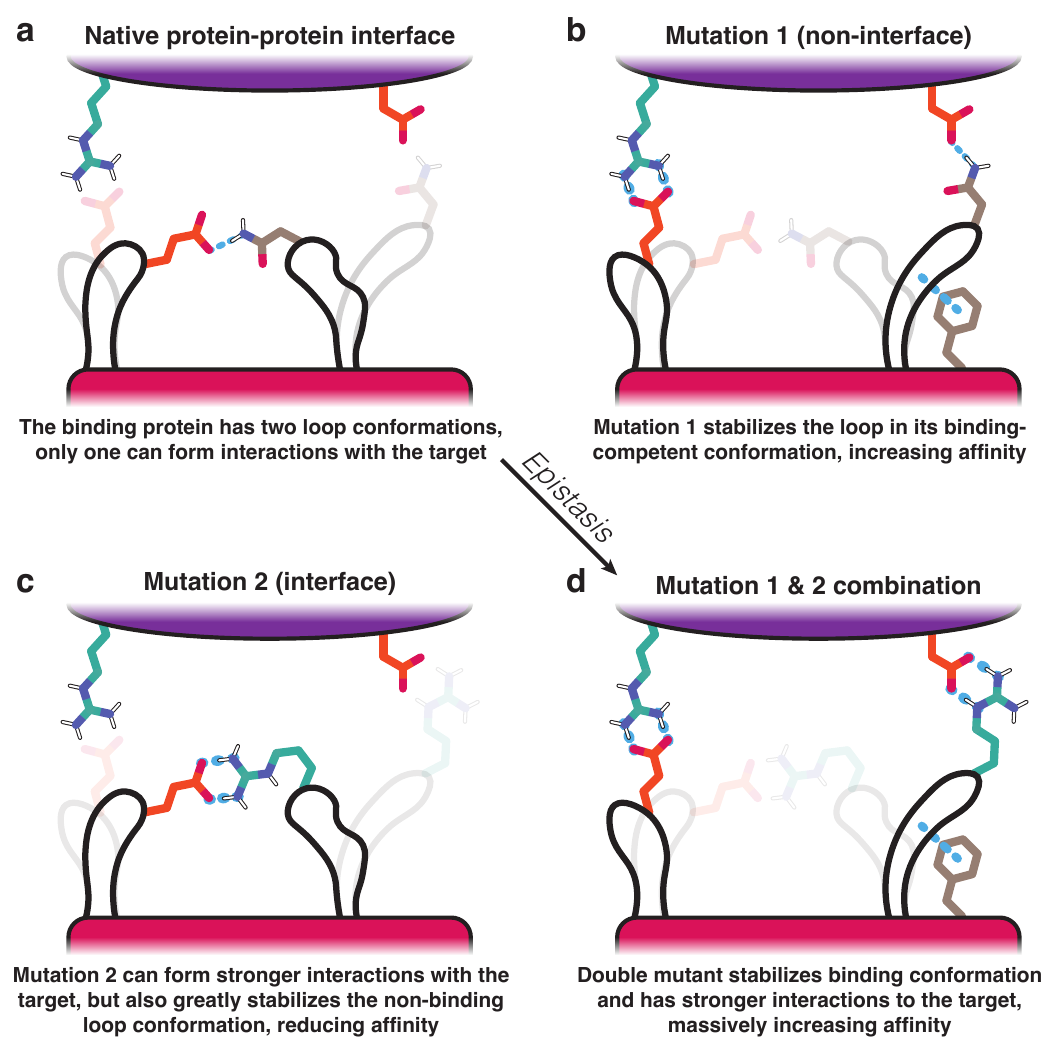}
    \caption{
        Illustration of an epistatic second-order interaction.
    }
    \label{fig:epistasis_illustration}
\end{figure}

\paragraph{Non-additive, position-dependent sensitivity to perturbation.}

By construction, any sequence optimization problem can be written as minimizing the minimum edit distance to some set of optimal solutions $\mathcal{X}^*$.
In the antibody engineering context $\mathcal{X}^*$ is not a singleton but a set of solutions that all satisfy a notion of \textit{complementarity} with the target antigen of interest (more specifically the target \textit{epitope}).
As a simple example, if the epitope has an arginine residue, then placing a glutamate residue on the antibody creates the possibility of a \textit{salt bridge} (see Fig. \ref{fig:salt_bridge_illustration}).
Furthermore, the formation of a salt bridge in this example requires that we place the glutamate at specific \textit{positions} on the antibody sequence that are in contact with the epitope (i.e. on the \textit{paratope}).
One of the reasons there are many possible solutions to the antibody-antigen binding problem is the absolute position of an amino acid in sequence space can vary as long as the resulting structure is more or less the same (i.e. there are two or more \textit{structural homologs}).
The functional effect of changes to the antibody sequence are not only non-additive, but can exhibit state-dependent higher-order interactions, a phenomenon known as \textit{epistasis} (Fig. \ref{fig:epistasis_illustration}).

We abstract these features of biophysical sequence optimization by specifying the objective as the satisfaction of a collection of $c$ \textit{spaced motifs} $\{(\mathbf m^{(1)}, \mathbf s^{(1)}), \dots, (\mathbf m^{(c)}, \mathbf s^{(c)})\}$, where $\mathbf m^{(i)} \in \mathcal{V}^k$ and $\mathbf s^{(i)} \in \mathbb{Z}_+^k$ for some $k \leq L // c$. 
Given a sequence $\mathbf x$, we can represent the degree to which $\mathbf x$ satisfies a particular $(\mathbf m^{(i)}, \mathbf s^{(i)})$ with $q \in [1, k]$ bits of precision as follows:
\begin{align}
  h_q(\mathbf x, \mathbf m^{(i)}, \mathbf s^{(i)}) =  \label{eq:motif_satisfaction}
   \max_{\ell < L} \left( \sum_{j = 1}^k \mathds{1}\{ x_{\ell + s_j^{(i)}} = m_j^{(i)}\} \right ) // \left(\frac{k}{q}\right) / q.
\end{align}
The quantization parameter $q$ allows us to control the \textit{sparsity} of the objective signal (note that $q$ must evenly divide $k$).
Taking $q = k$ corresponds to a dense signal which increments whenever one additional element of the motif is satisfied.
Taking $q = 1$ corresponds to a sparse signal that only increments when the whole motif is satisfied.
For example, if $k=2$ and $q = 2$ then Eq. \eqref{eq:motif_satisfaction} can assume the values 0, 0.5, or 1.
If $k=2$ and $q = 1$ then Eq. \eqref{eq:motif_satisfaction} can only assume the values 0 or 1.

\begin{wrapfigure}{r}{0.33\textwidth}   
  \centering
  \includegraphics[width=\linewidth]{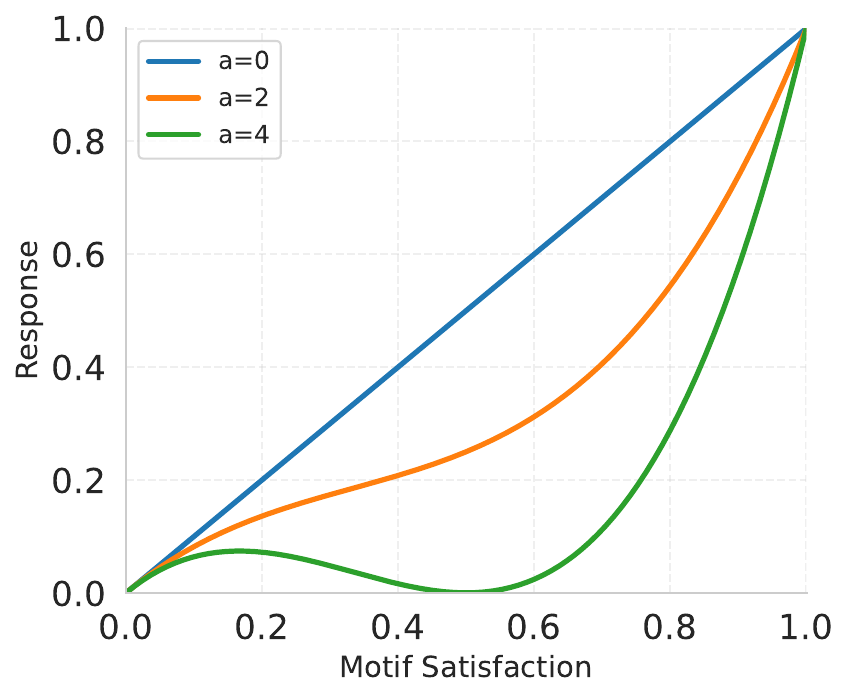}
  \caption{Motif satisfaction response function $g(h) = ah^3 - ah^2 + h$.}
  \label{fig:cubic_response_fn}
\end{wrapfigure}

The defining characteristic of epistasis is not merely non-additivity (which we can model via quantization and a product-of-motif-checks parameterization), but also \textit{non-monotonicity} as a function of motif satisfaction. 
In simple terms, we simply mean that a beneficial mutation in one sequence context can be deleterious in another.
We model non-monotonic mutational effects through the introduction of a \textit{response function} $g:[0, 1] \rightarrow \mathbb{R}$.
In particular we propose a cubic function $g(h) = ah^3 - ah^2 + h$, where $a \in \mathbb{R}_+$ is the \textit{epistasis factor} (see Fig. \ref{fig:cubic_response_fn}).
When $a = 0$, $g(h) = h$, corresponding to a linear response.
When $a = 4$, $g(0) = g(0.5)$, which means satisfying half of the motif scores the same as satisfying none of it. 
While $a > 4$ can be used, $f$ will no longer be non-negative for every element of the feasible set.
Note this reponse function is symmetric with respect to all motif elements, as there is no notion of some motif elements being more ``robust" to epistasis than others. 
While this modeling decision does not entirely agree with empirical evidence, it greatly simplifies the implementation, and captures the key non-monotonic behavior we desire.
In the experiments for this paper we took $a=0$ for all test functions, as we found the resulting optimization problems already sufficiently difficult to induce interesting differences in the optimizers, leaving an investigation of the effect of taking $a > 0$ for future work.

We are now ready to define an \textit{Ehrlich function} $f: \mathcal{V}^L \rightarrow (-\infty, 1]$, which quantifies with precision $q$ the degree to which $\mathbf x$ \textit{simultaneously} satifies all $(\mathbf m^{(i)}, \mathbf s^{(i)})$ if $\mathbf x$ is feasible, and is negative infinity otherwise.
\begin{align}
    f(\mathbf x) = \begin{cases}
        \prod_{i=1}^c g \circ h_q(\mathbf x, \mathbf m^{(i)}, \mathbf s^{(i)}) & \text{if } \mathbf x \in \mathcal{F} \\
        -\infty & \text{else}
    \end{cases}.
 \end{align}
Note that we must take some care to ensure that 1) the spaced motifs are \textit{jointly satisfiable} (i.e. are not mutually exclusive) and 2) at least one feasible solution under the DMP constraint in Eq. \eqref{eq:dmp_constraint}
attains the global optimal value of 1.
See Appendix \ref{subsec:constructing_spaced_motifs} for details.

One major advantage of procedurally generating specific instances of Ehrlich functions is we can generate as many distinct instances of this test problem as we like.
In fact it creates the possibility of ``train'' functions for algorithm development and hyperparameter tuning and ``test'' functions for evaluation simply by varying the random seed. These functions can also be defined with arbitrary levels of difficulty, as shown in Fig. \ref{fig:ehrlich_param_sensitivity}.
However, defining a random instance that is nevertheless provably solvable takes some care in the problem setup, which we now explain.

\begin{figure*}
    \centering
    \hfill
    \begin{subfigure}{0.16\textwidth}
        \includegraphics[width=\textwidth]{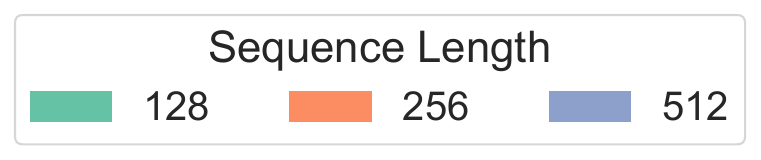}
    \end{subfigure}
    \hfill
    \begin{subfigure}{0.16\textwidth}
        \includegraphics[width=\textwidth]{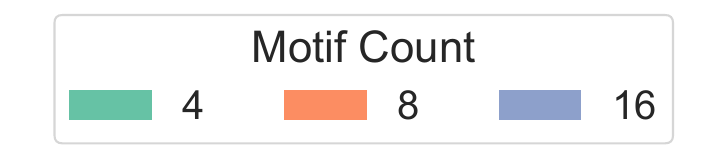}
    \end{subfigure}
    \hfill
    \begin{subfigure}{0.16\textwidth}
        \includegraphics[width=\textwidth]{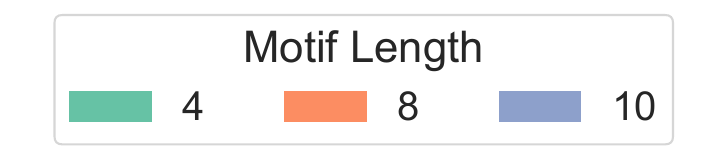}
    \end{subfigure}
    \hfill
    \begin{subfigure}{0.16\textwidth}
        \includegraphics[width=\textwidth]{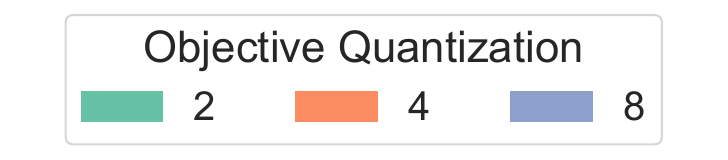}
    \end{subfigure}
    \hfill
    \\
    \hfill
    \begin{subfigure}{0.22\textwidth}
        \includegraphics[width=\textwidth]{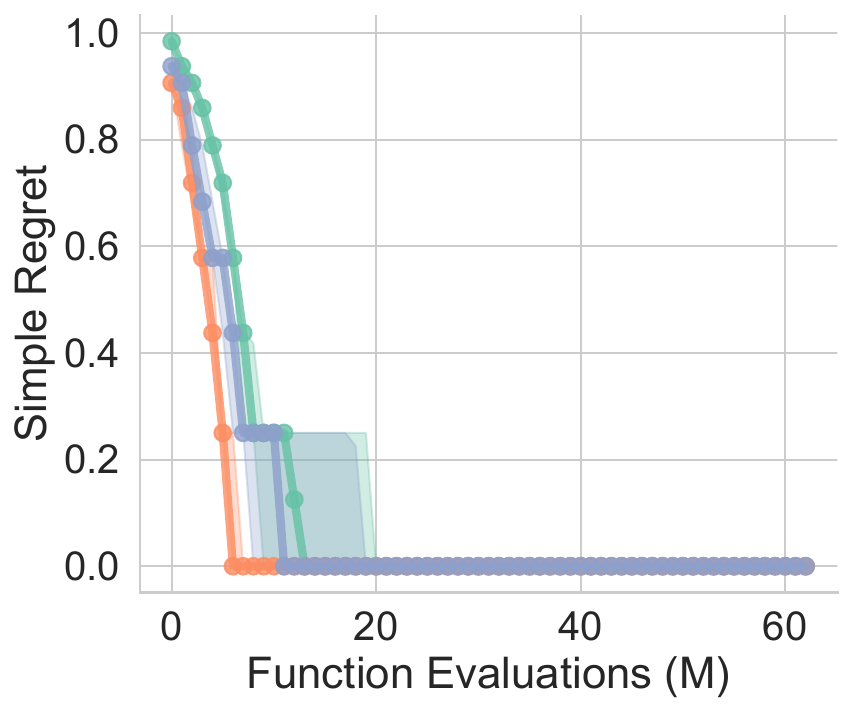}
    \end{subfigure}
    \hfill
    \begin{subfigure}{0.22\textwidth}
        \includegraphics[width=\textwidth]{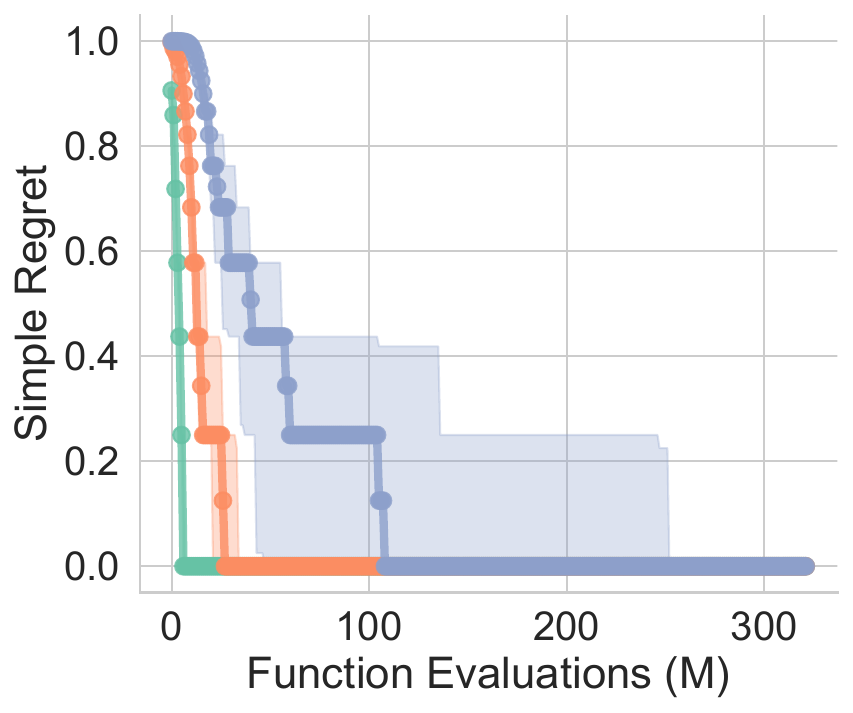}
    \end{subfigure}
    \hfill
    \begin{subfigure}{0.22\textwidth}
        \includegraphics[width=\textwidth]{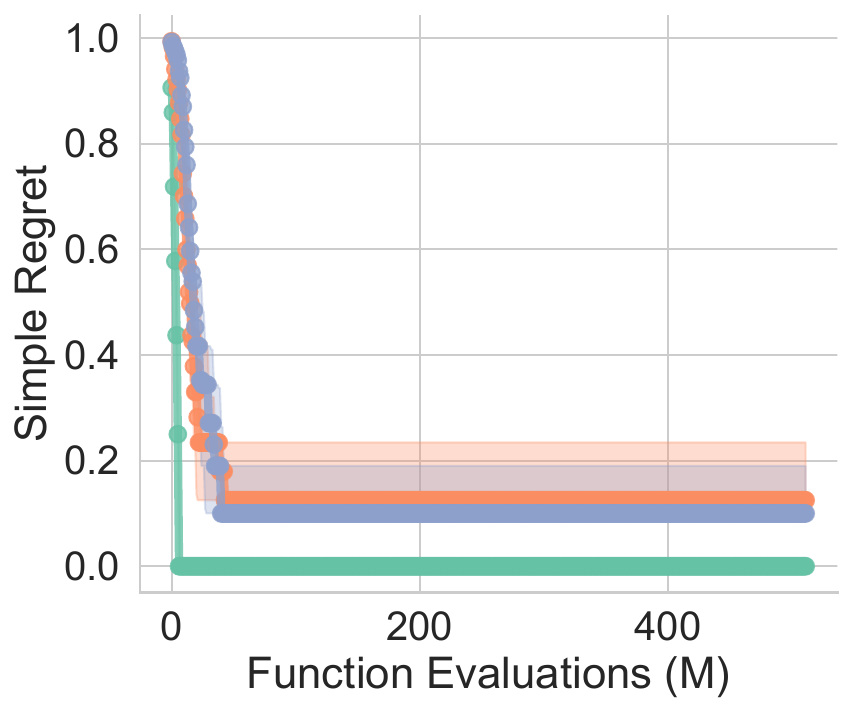}
    \end{subfigure}
    \hfill
    \begin{subfigure}{0.22\textwidth}
        \includegraphics[width=\textwidth]{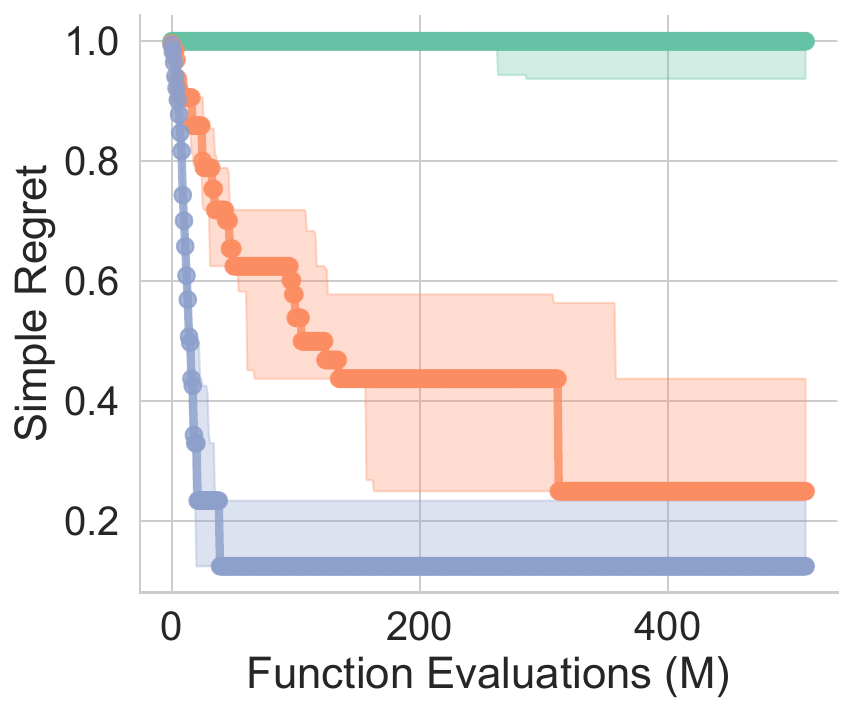}
    \end{subfigure}
    \hfill
    \caption{
        Here we show how the difficulty of the test problem can be controlled by varying Ehrlich function parameters, keeping the optimizer fixed to the robust GA baseline defined in Sec. \ref{app:ga}.
        Starting from a fixed set of reference parameters we vary each parameter individually.
        For this optimizer, the problem difficulty depends most strongly on the quantization parameter $q$. The x-axis is defined in \emph{millions} (M) of Ehrlich function evaluations, demonstrating the difficulty of these Ehrlich functions, even for a small number of short motifs.
    }
    \label{fig:ehrlich_param_sensitivity}
\end{figure*}

\subsubsection{Constructing the Transition Matrix}
\label{subsec:transition_matrix_construction}

Here we describe an algorithm to procedurally generate random ergodic transition matrices $A$ with infeasible transitions.
A finite Markov chain is ergodic if it is \textit{aperiodic} and \textit{irreducible} (since every irreducible finite Markov chain is positive recurrent).
Irreducibility means every state can be reached with non-zero probability from every other state by some sequence of transitions with non-zero probability.
We will ensure aperiodicity and irreducibility by requiring the zero entries of $A$ to have a banded structure. 
For intuition, consider the transition matrix
\begin{align*}
    \begin{bmatrix}
        0.4 & 0.3 & 0 & 0.3 \\
        0.3 & 0.4 & 0.3 & 0 \\
        0 & 0.3 & 0.4 & 0.3 \\
        0.3 & 0 & 0.3 & 0.4
    \end{bmatrix}
\end{align*}
Recalling that $v$ is the number of states, we can see that every state $x$ communicates with every other state $x'$ by the sequence
$x \rightarrow (x + 1) \mod v \rightarrow \dots \rightarrow (x' - 1) \mod v \rightarrow x'$.
We also see that the chain is aperiodic since every state $x$ has a non-zero chance of being repeated.

To make things a little more interesting we will shuffle (i.e. permute) the rows of a banded structured matrix (with bands that wrap around), but ensure that the diagonal entries are still non-zero.
Note that permuting the bands does not break irreducibility because valid paths between states can be found by applying the same permutation action on valid paths from the unpermuted matrix.
We will also choose the non-zero values randomly, using the shuffled banded matrix only as a binary mask $B$ as follows:
\begin{align*}
    \overset{\text{(banded matrix)}}{
    \begin{bmatrix}
        1 & 1 & 0 & 1 \\
        1 & 1 & 1 & 0 \\
        0 & 1 & 1 & 1 \\
        1 & 0 & 1 & 1
    \end{bmatrix}
    }
    &\xrightarrow[\text{shuffle}]{}
    \begin{bmatrix}
        1 & 0 & 1 & 1 \\
        1 & 1 & 1 & 0 \\
        1 & 1 & 0 & 1 \\
        0 & 1 & 1 & 1
    \end{bmatrix}, \\
    &\xrightarrow[\text{diag}=1]{}
    \begin{bmatrix}
        1 & 0 & 1 & 1 \\
        1 & 1 & 1 & 0 \\
        1 & 1 & 1 & 1 \\
        0 & 1 & 1 & 1
    \end{bmatrix} \\
    &= B.
\end{align*}

Now we draw the transition matrix starting with a random matrix with IID random normal entries, softmaxing with temperature $\tau > 0$ to make the rows sum to 1, applying the mask $B$, and renormalizing the rows by dividing by the sum of the columns after masking.

\begin{align*}
    Z &= \overset{\text{(randn matrix)}}{
    \begin{bmatrix}
        +1.41 & +1.67 & -1.52 & +0.63 \\
        -0.35 & +0.45 & +0.86 & -0.49 \\
        +1.42 & -1.31 & -0.31 & +1.43 \\
        -0.02 & +1.55 & -0.26 & +1.13
    \end{bmatrix},
    } \\
    &\xrightarrow[\text{softmax}]{}
    \begin{bmatrix}
        0.36 & 0.46 & 0.02 & 0.16 \\
        0.13 & 0.30 & 0.45 & 0.12 \\
        0.44 & 0.03 & 0.08 & 0.45 \\
        0.10 & 0.49 & 0.08 & 0.33
    \end{bmatrix}, \\
    &\xrightarrow[\odot B]{}
    \begin{bmatrix}
        0.36 & 0 & 0.02 & 0.16 \\
        0.13 & 0.30 & 0.45 & 0 \\
        0.44 & 0.03 & 0.08 & 0.45 \\
        0 & 0.49 & 0.08 & 0.33
    \end{bmatrix}, \\
    &\xrightarrow[\text{norm}]{}
    \begin{bmatrix}
        0.66 & 0 & 0.04 & 0.30 \\
        0.15 & 0.34 & 0.51 & 0 \\
        0.44 & 0.03 & 0.08 & 0.45 \\
        0 & 0.55 & 0.09 & 0.36
    \end{bmatrix} = A.
\end{align*}

We can also verify that $A$ is ergodic numerically by checking the Perron-Frobenius condition,
\begin{align}
    (A^m)_{ij} > 0, \; \forall i, j,
\end{align}
where $m = (v - 1)^2 + 1$, $A^1 = A$, and $A^b = A A^{b-1}$ for all $b > 1$.
In our example, if $v = 4$ then $m = 10$ and we verify on a computer that

\begin{align*}
    A^{10} = \begin{bmatrix}
        0.33 & 0.23 & 0.17 & 0.27 \\
        0.33 & 0.23 & 0.17 & 0.27 \\
        0.33 & 0.23 & 0.17 & 0.27 \\
        0.33 & 0.23 & 0.17 & 0.27 \\
    \end{bmatrix}
\end{align*}

\subsubsection{Constructing Jointly Satisfiable Spaced Motifs}
\label{subsec:constructing_spaced_motifs}

Here we describe how to procedurally generate $c$ spaced motifs of length $k$ such that the existence of a optimal solution $\mathbf x^*$ with length $L$ with non-zero probability under a transition matrix $A$ generated by the procedure in Appendix \ref{subsec:transition_matrix_construction} can be verified by construction.
If we simply sampled motifs completely at random from $\mathcal{V}^k$ we cannot be sure that a solution attaining a global optimal value of 1 is actually feasible under the DMP constraint.

First we require that $L \geq c \times k$.
Next to define the motifs, we draw a single sequence of length $c \times k$ from the DMP defined by $A$ (the first element can be chosen arbitrarily).
Then we chunk the sequence into $c$ segments of length $k$, which defines the motif elements $\mathbf m^{(i)}$.
This ensures that any motif elements immediately next to each other are feasible, and ensures that one motif can transition to the next if they are placed side by side. 

Next we draw random offset vectors $\mathbf s^{(i)}$.
The intuition here is we want to ensure that an optimal solution can be constructed by placing the spaced motifs end-to-end.
If we fix $c \times k$ positions to satisfy the motifs, there are $L - c \times k$ ``slack'' positions that we evenly distribute (in expectation) between the spaces between the elements of each motif.
We set the first element of every spacing vector $s^{(i)}_1$ to 0, then set the remaining elements to the partial sums of a random draw from a discrete simplex as follows:
\begin{align}
    \mathbf w^{(i)} &\sim \mathcal{U}\left(\{\mathbf w \in \mathbb{R}^{k - 1} \mid \sum w_i = 1\}\right). \\
    s^{(i)}_{j + 1} &= s^{(i)}_j + 1 + \lfloor w^{(i)}_j \times (L - c \times k) // c  \rfloor.
\end{align}

Finally, recall that in Appendix \ref{subsec:transition_matrix_construction} we ensured that self-transitions $x \rightarrow x$ always have non-zero probability.
This fact allows us to construct a feasible solution that attains the optimal value by filling in the spaces in the motifs with the previous motif elements.

As a concrete example, suppose $L = 8$, $c = 2$, and $k = 2$ (hence $s_{\max} = 3$) and we draw the following set of spaced motifs:
\begin{align}
    \begin{bmatrix}
        0 & 3 & 1 & 2
    \end{bmatrix} 
    &\rightarrow 
    \begin{bmatrix}
        0 & 3 \\
        1 & 2
    \end{bmatrix} = \begin{bmatrix}
        \mathbf m^{(1)} \\
        \mathbf m^{(2)}
    \end{bmatrix}, \\
    \begin{bmatrix}
        \mathbf s^{(1)} \\
        \mathbf s^{(2)}
    \end{bmatrix} &=
     \begin{bmatrix}
        0 & 3 \\
        0 & 3 
    \end{bmatrix}. \\
\end{align}
An optimal solution can then be constructed as follows:
\begin{align*}
    \mathbf x^* = 
\begin{bmatrix}
    0 & 0 & 0 & 3 & 1 & 1 & 1 & 2 \\
\end{bmatrix}
\end{align*}
Note that this solution is only used to \textit{verify} that the problem can be solved.
In practice solutions found by optimizers like a genetic algorithm will look different.
Additionally if $L \gg c \times k$ then the spaced motifs can often be feasibly interleaved without clashes.

\subsubsection{Defining The Initial Solution}

Optimizer performance is generally quite sensitive to the choice of initial solution.
In our experiments we fixed the initial solution to a single sequence of length $L$ drawn from the DMP.

\subsubsection{Ehrlich Benchmark Rank Correlation with Real-World Data Benchmarks}
To validate the real-world applicability of Ehrlich functions as an optimizer benchmark, we compared the performance of 64 variants of our genetic algorithm on Ehrlich functions versus three well-established lookup-based biological test functions: TFBind8 \citep{tfbind8}, TrpB \citep{trpb}, and DhfR \citep{dhfr_benchmark}. For each benchmark, we evaluate the median cumulative regret (estimated from 8 trials) of 64 variants of our GA with different hyperparameter settings. Each hyperparameter variant $(p_m, p_r, \alpha)$ was an element of the Cartesian product $\{0.0625, 0.125, 0.25, 0.5\} \times \{0.0625, 0.125, 0.25, 0.5\} \times \{0.0625, 0.125, 0.25, 0.5\}$.
We then computed rank correlations between algorithm performance on these biological benchmarks vs. comparable Ehrlich functions. As shown in Table \ref{tab:benchmark_correlations}, the strong rank correlations are evidence supporting the claim that Ehrlich functions effectively capture the structure of real biological sequence optimization problems.

\begin{table}[ht!]
    \centering
    \begin{tabular}{lccc} \toprule
     & \textbf{DhfR} & \textbf{TFBind8} & \textbf{TrpB} \\ \midrule
     \textbf{Ehr(4, 4)-2-2-2} &	0.75 & 0.75 & 0.61 \\
    \textbf{Ehr(20, 8)-2-2-2} & 0.86 & 0.89 & 0.73 \\ \bottomrule
    \end{tabular}
    \caption{Spearman correlation coefficients of median regret achieved by our genetic algorithm variants on Ehrlich functions versus lookup-based biological test functions.}
    \label{tab:benchmark_correlations}
\end{table}

\clearpage

\newpage

\subsection{Proof of Strong Interpolation Criteria for the Policy Objective}
\label{app:sic_proof}

The policy objective is defined as:
\begin{equation}
    \mathcal{L}(\theta) := \KL(\pi_\theta \parallel \pi^*) + \lambda\KL(\pi_{\text{ref}} \parallel \pi_\theta),
    \label{eq:obj_appendix}
\end{equation}
where $\pi_\theta$ is the learned policy, $\pi^*$ is the optimal target policy (Boltzmann distribution based on rewards), $\pi_{\text{ref}}$ is the reference prior policy, and $\lambda \ge 0$ is the interpolation hyperparameter. We assume that $\pi_\theta$, $\pi^*$, and $\pi_{\text{ref}}$ are probability distributions over the same action space $\mathcal{Y}$ (conditioned on a state/prompt $x$, which we omit for brevity in the distributions). We also assume that the family of policies parameterized by $\theta$ is sufficiently flexible to represent both $\pi^*$ and $\pi_{\mathrm{ref}}$.

The Strong Interpolation Criteria (SIC) requires:
\begin{enumerate}
    \item $\lim_{\lambda\to 0^+} \arg\min_{\pi_\theta} \mathcal{L}(\theta) = \pi^*$
    \item $\lim_{\lambda\to\infty} \arg\min_{\pi_\theta} \mathcal{L}(\theta) = \pi_{\mathrm{ref}}$
    \item For $\lambda \in (0, \infty)$, the optimal $\pi_\theta$ (denoted $\pi_\theta^{(\lambda)}$) interpolates between $\pi^*$ and $\pi_{\mathrm{ref}}$.
\end{enumerate}

\subsubsection{Proof of SIC Condition 1 ($\lambda \to 0^+$)}
As $\lambda \to 0^+$, the objective function in Eq. (\ref{eq:obj_appendix}) becomes:
\begin{align*}
    \lim_{\lambda\to 0^+} \mathcal{L}(\theta) &= \lim_{\lambda\to 0^+} \left( \KL(\pi_\theta \parallel \pi^*) + \lambda\KL(\pi_{\text{ref}} \parallel \pi_\theta) \right) \\
    &= \KL(\pi_\theta \parallel \pi^*) + 0 \cdot \KL(\pi_{\text{ref}} \parallel \pi_\theta) \\
    &= \KL(\pi_\theta \parallel \pi^*).
\end{align*}
The Kullback-Leibler divergence $\KL(\pi_\theta \parallel \pi^*)$ is non-negative, i.e., $\KL(\pi_\theta \parallel \pi^*) \ge 0$. It achieves its minimum value of $0$ if and only if $\pi_\theta(y) = \pi^*(y)$ for almost all $y \in \mathcal{Y}$.
Therefore,
$$ \lim_{\lambda\to 0^+} \arg\min_{\pi_\theta} \mathcal{L}(\theta) = \pi^*. $$
This satisfies the first condition of SIC.

\subsubsection{Proof of SIC Condition 2 ($\lambda \to \infty$)}
As $\lambda \to \infty$, we analyze the objective function $\mathcal{L}(\theta) = \KL(\pi_\theta \parallel \pi^*) + \lambda\KL(\pi_{\text{ref}} \parallel \pi_\theta)$.

Consider the case where $\pi_\theta = \pi_{\mathrm{ref}}$.
In this scenario, $\KL(\pi_{\text{ref}} \parallel \pi_\theta) = \KL(\pi_{\text{ref}} \parallel \pi_{\mathrm{ref}}) = 0$.
The objective function evaluates to:
$$ \mathcal{L}(\theta) |_{\pi_\theta = \pi_{\mathrm{ref}}} = \KL(\pi_{\mathrm{ref}} \parallel \pi^*) + \lambda \cdot 0 = \KL(\pi_{\mathrm{ref}} \parallel \pi^*). $$
This value is a finite constant (assuming $\pi_{\mathrm{ref}}$ and $\pi^*$ have overlapping support such that the KL divergence is well-defined and finite).

Now, consider any policy $\pi_\theta \neq \pi_{\mathrm{ref}}$ such that $\KL(\pi_{\text{ref}} \parallel \pi_\theta) > 0$.
As $\lambda \to \infty$, the term $\lambda\KL(\pi_{\text{ref}} \parallel \pi_\theta)$ will tend to $\infty$, because $\KL(\pi_{\text{ref}} \parallel \pi_\theta)$ is a positive constant for this $\pi_\theta$.
Thus, for $\pi_\theta \neq \pi_{\mathrm{ref}}$ (where $\KL(\pi_{\text{ref}} \parallel \pi_\theta) > 0$):
$$ \lim_{\lambda\to\infty} \mathcal{L}(\theta) = \KL(\pi_\theta \parallel \pi^*) + \lim_{\lambda\to\infty} \lambda\KL(\pi_{\text{ref}} \parallel \pi_\theta) = \infty. $$
To minimize $\mathcal{L}(\theta)$ as $\lambda \to \infty$, the policy $\pi_\theta$ must be chosen such that the term growing with $\lambda$ is minimized, which means $\KL(\pi_{\text{ref}} \parallel \pi_\theta)$ must be $0$. This occurs if and only if $\pi_\theta(y) = \pi_{\mathrm{ref}}(y)$ for almost all $y \in \mathcal{Y}$.
Therefore,
$$ \lim_{\lambda\to\infty} \arg\min_{\pi_\theta} \mathcal{L}(\theta) = \pi_{\mathrm{ref}}. $$
This satisfies the second condition of SIC.

\subsubsection{Proof of SIC Condition 3 (Interpolation for $\lambda \in (0, \infty)$)}

For any finite, positive $\lambda$, the objective function $\mathcal{L}(\pi_\theta) := \KL(\pi_\theta \parallel \pi^*) + \lambda\KL(\pi_{\text{ref}} \parallel \pi_\theta)$ is minimized by a policy $\pi_\theta^{(\lambda)}$.
We established that $\pi^*(y | x) \propto \exp(\beta \cdot r(x, y))$ is a Boltzmann distribution, and $\pi_{\text{ref}}$ is the reference policy.

The objective $\mathcal{L}(\pi_\theta)$ is a sum of two non-negative divergence terms.
The first term, $\KL(\pi_\theta \parallel \pi^*)$, pulls $\pi_\theta$ towards $\pi^*$.
The second term, $\lambda\KL(\pi_{\text{ref}} \parallel \pi_\theta)$, pulls $\pi_\theta$ towards $\pi_{\mathrm{ref}}$ (in a mass-covering sense).

If we operate directly in the space of probability distributions (assuming our model family $\Pi_\Theta = \{\pi_\theta\}$ is rich enough to contain the solutions), $L(\pi)$ is a strictly convex function of the distribution $\pi \in \Pi_\Theta$ for any $\lambda\geq 0$ (as both KL terms are convex in $\pi$, and the first is strictly convex if $\pi^*$ is fixed, and the second is convex in $\pi$). Thus, for any $\lambda \geq 0$, there exists a unique minimizing distribution $\pi^{(\lambda)}$ in a suitable convex space of distributions.

The first-order optimality condition for the unconstrained problem (in the space of distributions, subject to $\sum_y \pi(y) = 1$) for $\mathcal{L}(\pi)$ is found by setting the functional derivative with respect to $\pi(y)$ to zero:
$$ \frac{\delta \mathcal{L}(\pi)}{\delta \pi(y)} = \left( \log \frac{\pi(y)}{\pi^*(y)} + 1 \right) + \lambda \left( -\frac{\pi_{\mathrm{ref}}(y)}{\pi(y)} \right) + \mu = 0 $$
where $\mu$ is the Lagrange multiplier for the normalization constraint.
This implies that the optimal distribution $\pi^{(\lambda)}(y)$ satisfies:
$$ \log \pi^{(\lambda)}(y) - \log \pi^*(y) - \lambda \frac{\pi_{\mathrm{ref}}(y)}{\pi^{(\lambda)}(y)} + (1+\mu) = 0 $$
$$ \pi^{(\lambda)}(y) \propto \pi^*(y) \exp\left(\lambda \frac{\pi_{\mathrm{ref}}(y)}{\pi^{(\lambda)}(y)}\right). \quad (*)$$
This equation implicitly defines the unique optimal distribution $\pi^{(\lambda)}(y)$.
The solution $\pi^{(\lambda)}$ to the implicit equation $(*)$ varies continuously with $\lambda$. This is because the objective function $\mathcal{L}(\pi)$ changes continuously with $\lambda$, and the minimizer of a strictly convex function typically depends continuously on such parameters.
Thus, the optimal policy $\pi_\theta^{(\lambda)}$ (which is the $\pi_\theta$ in our model family that best approximates $\pi^{(\lambda)}$) smoothly interpolates from $\pi^*$ to $\pi_{\mathrm{ref}}$ as $\lambda$ varies from $0^+$ to $\infty$.
This satisfies the third condition of SIC.
\qed

\subsection{Derivations}\label{app:derivations}

Although DPO has recently become a popular preference learning method due to its relative simplicity and competitive results, its offline contrastive objective suffers from a number of drawbacks. Firstly, since DPO optimizes for an off-policy objective, a DPO-trained model rapidly over-optimizes, resulting in generations that decline in quality despite continued improvements in offline metrics \citep{rafailov2024scalinglawsrewardmodel,chen2024preference}. DPO models also fail at ranking text according to human preferences \citep{chen2024preference} and tend to decrease the probability mass assigned to preferred outputs \citep{pal2024smaugfixingfailuremodes,rafailov2024qfunction,feng2024analyzingunderstandinglimitationsdpo,pang2024iterativereasoningpreferenceoptimization}. As training continues, DPO generations also increase in length, even if the quality of the generations does not necessarily improve \citep{singhal2024a}. Additionally, when the reference model already performs well on a particular subset of the input domain, DPO cannot achieve the optimal policy without deteriorating performance on that subset \citep{hu2024new}. Lastly, DPO does not make use of absolute reward values -- instead, it simply assumes that $r(x,y_w)>r(x,y_l)$ for all $(x,y_w,y_l)$ in the training dataset, but does not use information about \emph{how much better} $y_w$ is than $y_l$.

RLHF, on the other hand, involves steep technical complexity and frequently exhibits training instabilities \citep{zheng2023secretsrlhflargelanguage,wang2024secretsrlhflargelanguage,casper2023openproblemsfundamentallimitations}. \citet{hu2024new,korbak2022rlklpenaltiesbetter} additionally show that RLHF's objective interpolates between $\pi_\text{Ref}$ and a degenerate distribution $\pi^\delta$ that places all probability mass on a single reward-maximizing sequence, thereby promoting generator collapse. Indeed, much past work \citep{kirk2024understanding,zhou2024sequencesequencerewardmodeling,padmakumar2024doeswritinglanguagemodels} has illustrated the low diversity of RLHF-generated text.

\paragraph{Derivation of the MargE Objective}
To derive a training objective that fulfills SIC, we follow \citet{hu2024new} and propose an objective that takes the following general form:
\begin{equation}\label{eqn:marge-general2}
    \arg\min_\theta \left[\mathbb{KL}(\tilde{\pi}_\theta\|\pi^*) + \lambda \mathbb{KL}(\pi_\text{Ref}\|\tilde{\pi}_\theta) \right]
\end{equation}
where $\pi^*$ is the target reward distribution and $\tilde{\pi}_\theta$ is the length-normalized version of $\pi_\theta$.

Since past work has indicated that preference-tuned models often spuriously favor longer generations, we follow \citet{malladi2024hiddeninfinity,meng2024simpo} by instead using length-normalized likelihoods. Additionally, it is standard in past literature \citep{ziegler2019fine,korbak2022on} to model $\pi^*$ as a Boltzmann function of the reward. That is,
\begin{equation*}
    \pi^*(y\mid x)=\frac{1}{Z(x)}\exp(r(x,y))
\end{equation*}
where $Z(x)$ is the partition function.  Although this partition function is not guaranteed to converge due to the observation that LLMs often place non-zero probability mass on infinite-length sequences \citep{du-etal-2023-measure,welleck-etal-2020-consistency}, we make the assumption for now that this effect is negligible. This results in a formulation of the optimal policy that is Bradley-Terry with respect to the rewards. \emph{I.e.},
\begin{equation*}
    \log\pi^*(y_1|x)-\log\pi^*(y_2|x) = r(x,y_1) - r(x,y_2)
\end{equation*}
Then, the first term of Equation \ref{eqn:marge-general2} can be expanded as:
\begin{equation*}
    \arg\min_\theta \mathbb{KL}(\tilde{\pi}_{\theta}\|\pi^*) = \arg\min_\theta \underset{\substack{x\sim\mathbb{D}_x,\\ y\sim \tilde{\pi}_\theta(\cdot\mid x)}}{\mathbb{E}} \log\left(\frac{\tilde{\pi}_\theta(y|x)}{\pi^*(y|x)}\right)
\end{equation*}
Since we cannot directly sample from $\tilde{\pi}_\theta$, we approximate it via an expectation over the un-normalized policy:
\begin{align*}
    &\approx \arg\min_\theta \underset{\substack{x\sim\mathbb{D}_x,\\ y\sim \pi_\theta(\cdot\mid x)}}{\mathbb{E}} \left[\frac{\log\pi_\theta(y|x)}{|y|} - r(x,y) + \log Z(x)\right] \\ 
    &= \arg\min_\theta \underset{\substack{x\sim\mathbb{D}_x,\\ y\sim \pi_\theta(\cdot\mid x)}}{\mathbb{E}} \left[\frac{\log\pi_\theta(y|x)}{|y|} - r(x,y)\right] 
\end{align*}
Rather than re-sampling from $\pi_\theta$ after every step of training, we approximate this step using importance sampling:
\begin{align}
    \mathbb{KL}(\pi_\theta\|\pi^*) &\approx \underset{\substack{x\sim\mathbb{D}_x,\\ y\sim \pi_\theta(\cdot\mid x)}}{\sum} \frac{\pi_\theta(y|x)}{\pi_\text{Ref}(y|x)}\pi_\text{Ref}(y|x) \left[\frac{\log\pi_\theta(y|x)}{|y|} - r(x,y)\right] \nonumber \\
    &\approx \underset{\substack{x\sim\mathbb{D}_x,\\ y\sim \pi_\text{Ref}(\cdot\mid x)}}{\mathbb{E}} \frac{\pi_\theta(y|x)}{\pi_\text{Ref}(y|x)} \left[\frac{\log\pi_\theta(y|x)}{|y|} - r(x,y)\right] \label{eqn:kl-to-pi-star-approx}
\end{align}
For the second term of Equation \ref{eqn:marge-general2}, we remove terms not depending on $\theta$, resulting in the standard token-averaged cross-entropy loss:
\begin{equation}
\min_\theta\mathbb{KL}(\pi_\text{Ref}\|\pi_\theta) \equiv \min_\theta \underset{\substack{x\sim\mathbb{D}_x,\\ y\sim \pi_\text{Ref}(\cdot\mid x)}}{\mathbb{E}} -\frac{\log \pi_\theta(y|x)}{|y|} \label{eqn:kl-to-ref}
\end{equation}
Plugging Equations \ref{eqn:kl-to-pi-star-approx} and \ref{eqn:kl-to-ref} back into \ref{eqn:marge-general2}, we obtain
\begin{equation*}
    \mathcal{L}_\text{MargE}(\pi_\theta,\pi_\text{Ref};\mathbb{D}_x) = \underset{\substack{x\sim\mathbb{D}_x,\\ y\sim \pi_\text{Ref}(\cdot\mid x)}}{\mathbb{E}} \left[\frac{\pi_\theta(y|x)}{\pi_\text{Ref}(y|x)} \left(\frac{\log\pi_\theta(y|x)}{|y|} - r(x,y)\right)  - \lambda \frac{\log \pi_\theta(y|x)}{|y|} \right].
\end{equation*}
Since importance sampling often leads to high variance of the gradient estimates in practice \citep{mcbook}, we instead use the self-normalized version of this objective (Eq. \ref{eq:marge-self-normalized}).

\begin{lemma}\label{thm:translation-invariance}
    Given a Bradley-Terry model $\pi(y|x)=\frac{1}{Z(x)}\exp(r(x,y))$ with partition function $Z(x)$, if reward function $r(x,y)=f(y)-f(x)$ for some real-valued scoring function $f(x):\mathcal{V}^*\to\mathbb{R}$, then $\pi(y|x)=\pi(y|z)$ for any pair $x,z\in\mathrm{dom}(f)$.
\end{lemma}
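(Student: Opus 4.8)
The plan is to substitute the factorized reward $r(x,y)=f(y)-f(x)$ into the Boltzmann form and observe that the entire prompt-dependent factor cancels exactly between the numerator and the partition function. First I would write the partition function explicitly as $Z(x)=\sum_{y'}\exp(r(x,y'))$, where the sum runs over the support of the distribution (e.g. $\mathcal{V}^*$), so that $\pi(y|x)=\exp(r(x,y))/Z(x)$ by definition of the model.

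Substituting the reward gives $\pi(y|x)=\exp(f(y)-f(x))/Z(x)$. The key move is to note that $\exp(-f(x))$ is constant with respect to the summation index $y'$, so it pulls out of the partition function: $Z(x)=\sum_{y'}\exp(f(y')-f(x))=\exp(-f(x))\sum_{y'}\exp(f(y'))$. Writing $W:=\sum_{y'}\exp(f(y'))$ for the $x$-independent normalizer, the two occurrences of $\exp(-f(x))$ in the numerator and in $Z(x)$ cancel, leaving $\pi(y|x)=\exp(f(y))/W$. Since neither $\exp(f(y))$ nor $W$ depends on $x$, the identical computation for any other $z\in\mathrm{dom}(f)$ yields $\pi(y|z)=\exp(f(y))/W$, whence $\pi(y|x)=\pi(y|z)$, completing the argument.

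The main (and essentially only) subtlety is the well-definedness of $W=\sum_{y'}\exp(f(y'))$: the cancellation argument presumes the partition function is finite. The paper already flags this as a working assumption (LLMs may place nonzero mass on arbitrarily long sequences, so $Z$ need not converge in general). Provided $W<\infty$, there is no genuine obstacle, as the conclusion follows immediately from the additive, separable structure of the reward $r(x,y)=f(y)-f(x)$.
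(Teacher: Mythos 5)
Your proof is correct and follows essentially the same route as the paper's: factor $\exp(-f(x))$ out of the partition function, cancel it against the numerator, and observe that what remains, $\exp(f(y))/\sum_{y'}\exp(f(y'))$, is prompt-independent. Your explicit remark that the argument requires $\sum_{y'}\exp(f(y'))<\infty$ is a welcome addition that the paper only acknowledges elsewhere, in its derivation of the MargE objective.
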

\begin{proof}
    Since $Z(x)$ is the normalizing constant for $\pi(y|x)$, we can write
    \begin{align}
        Z(x) &=\sum_{y'\in\mathcal{V}^*}\exp(r(x,y')) \\
             &= \sum_{y'\in\mathcal{V}^*}\exp(f(y') - f(x)) \\
             &= \frac{1}{\exp(f(x))}\sum_{y'\in\mathcal{V}^*} \exp(f(y')).
    \end{align}
    It follows that
    \begin{align}
        \pi(y|x) &= \frac{\exp(f(x))}{\sum_{y'\in\mathcal{V}^*} \exp(f(y'))}\exp(r(x,y)) \\
        &= \frac{\exp(f(x)+f(y)-f(x))}{\sum_{y'\in\mathcal{V}^*} \exp(f(y'))} \\
        &= \frac{\exp(f(y))}{\sum_{y'\in\mathcal{V}^*} \exp(f(y'))} \\
        &= \frac{\exp(f(y)+f(z)-f(z))}{\sum_{y'\in\mathcal{V}^*} \exp(f(y'))} \\
        &= \frac{\exp(f(z))\exp(r(z,y))}{\sum_{y'\in\mathcal{V}^*} \exp(f(y'))} \\
        &= \frac{1}{Z(y)}\exp(r(z,y)) \\
        &= \pi(y|z)
    \end{align}
    
\end{proof}
\clearpage
\subsection{Prompt}\label{app:prompt}
We use this prompt only in our initial prompting experiment with \texttt{o1} and Gemini Flash (see Section \ref{sec:solver-benchmark-results}), with \verb|<TRANSITION MATRIX HERE>| replaced by the transition matrix corresponding to $f_2$.

\begin{tcolorbox}[breakable]
You are given a challenging discrete optimization problem to solve. The problem consists of generating a discrete integer sequence (with vocabulary consisting of integers from 0 to 31, inclusive) that lies in the support of a discrete Markov process (DMP) and contains specific spaced motifs. The transition matrix for the DMP is as follows: 

\begin{verbatim}
<TRANSITION MATRIX HERE>
\end{verbatim}

The motifs are [ 3, 16, 15, 11], [24,  3, 16, 15], [11, 14,  8, 10], [22, 27,  7, 20] and the respective spacings for these motifs are [0, 2, 4, 5], [0, 3, 5, 6], [0, 1, 4, 6], and [0, 2, 4, 15]. Your solution will be scored based both on whether it is feasible (i.e. lies in the support of the DMP) and how many motifs it fulfills. That is, if the $c$ motifs and their spacings are denoted as $\{(m^{(1)},s^{(1)}),\cdots,(m^{(c)},s^{(c)})\}$, then we score whether sequence $x$ fulfills motif $i$ with $h(x,m^{(i)},s^{(i)})=\max_{l<L}(\sum_{j=1}^k\mathds{1}[x_{l+s_j^{(i)}}=m_j^{(i)}])/4$ where $L$ is the length of $x$ ($L=32$ in this problem). If the sequence $x$ is feasible, then the total score is  $\prod_{i=1}^c h(x,m^{(i)},s^{(i)})$. Otherwise, the score is 0. \\

Here are some example sequences and their respective scores:

$x_1=[12, 31, 2, 4, 15, 7, 14, 15, 12, 31, 11, 29, 25, 1, 15, 11, 19, 24, 22, 5, 17, 27, 1, 14, 31, 28, 16, 15, 11, 14, 16, 10]$
Score of $x_1$: $(1/4)*(1/4)*(1/2)*(1/2)=1/64$ \\

$x_2=[12, 31, 2, 4, 15, 11, 14, 15, 12, 31, 11, 10, 25, 1, 15, 11, 19, 24, 22, 5, 17, 27, 1, 14, 31, 28, 16, 15, 11, 14, 10, 15]$
Score of $x_2$: 0 \\

$x_3 = [ 3, 16, 15, 11, 24, 24, 24, 24, 15, 11, 22, 22, 22, 22, 22, 22, 22, 22, 22, 22, 22, 22, 22, 22, 22, 22, 22, 22, 22, 22, 22,$ \\
$22]$ \\
Score of $x_3$: $(1/2)*(1/4)*(1/4)*(1/4)=1/128$ \\

$x_4=[ 3,  3, 16, 16, 15, 11, 24, 24, 24, 24, 15, 11, 22, 22, 22, 22, 22, 22, 22, 22, 22, 22, 22, 22, 22, 22, 22, 22, 22, 22, 22,$ \\
$22]$ \\
Score of $x_4$: $1*(1/4)*(1/4)*(1/4)=1/64$ \\

It is often a good idea to start with a random sequence, perturb it a bit, score it and stay with this perturbed version if it scores better. You can repeat this process as many times as you want until the score converges to the maximum value. It is extremely important to check your progress once in a while. If you haven't made good progress, it would be a good idea to backtrack what you have done (make sure to mark important steps with special markers so that you know where to backtrack to) and resume from there on. \\

Since there could be many isolated solutions, it would be a good idea to try this process from a few different random initial sequences. Make sure to score the solution of each of these random initial sequences and revisit them at the end to find the very best sequence.  \\

Everyone's life depends on finding the best sequence, as this sequence would be a key to finding a medicine for the future pandemic. \\

It is important for you to solve this problem directly yourself by thinking about it carefully and iteratively, as you are not allowed to use the computer, due to safety measures. Please directly output a sequence and not code, as you do not have access to an environment to execute the code in.
\end{tcolorbox}

\clearpage

\subsection{Algorithms}\label{app:algs}

\begin{algorithm}[hbt!]
\SetAlgoLined
\caption{$\textsc{DatasetFormatting}(S)$. An algorithm that adapts PropEn \citep{tagasovska2024implicitlyguideddesignpropen} to create pairs or triples of data for LLM training. We use $k_n=30$, $\Delta x=0.25$, and the fractional Hamming distance as $d$.}\label{alg:dataset-formatting}
\textbf{Input: }Dataset $S=\{(\mathbf{x},y)\}$ of particles $\mathbf{x}$ and scores $y$; $k_n$ number of nearest neighbors to find; distance function $d$; threshold $\Delta x$; dataset type $type=(\text{``binary"}\mid \text{``triple"})$

\eIf{$type=\text{``binary"}$}
{
    $\mathcal{D}\gets \left\{(x_i,x_j)\bigg\lvert\, \substack{(x_i,y_i),(x_j,y_j)\in S \\ x_j\in\textsc{KNearestNeighbors}(x_i, k_n) \\ d(x_i,x_j)\leq\Delta_x \\ f(x_j) > f(x_i)} \right\}$\;
}{
    $\mathcal{D}\gets\left\{(x_i,x_j,x_k)\bigg\lvert\, \substack{(x_i,y_i),(x_j,y_j),(x_k,y_k)\in S \\ x_j,x_k\in \textsc{KNearestNeighbors}(x_i, k_n) \\ d(x_i,x_j)\leq\Delta_x,d(x_i,x_k)\leq\Delta_x \\ f(x_j)>f(x_i),f(x_i)\geq f(x_k)}\right\}$
}
\KwResult{$\mathcal{D}$}
\end{algorithm}

\begin{algorithm}[hbt!]
\SetAlgoLined
\caption{$\textsc{IterativeRefinement}(\pi_\theta;S)$. $\pi_\theta^t$ represents $\pi_\theta$ with temperature $t$ scaling. We use $n_s=200$, $n_i=10$, $n_o=10$, and $T=[0.6, 0.8, 1.0, 1.2, 1.4, 1.6]$.}\label{alg:iter-gen}
\textbf{Input: } Pretrained LLM $\pi_{\theta}$; dataset $S=\{(\mathbf{x},y)\}$ of sequences $\mathbf{x}$ and scores $y$; $n_s$ number of seed examples; $n_i$ rounds of iteration per example; $n_o$ outputs per iteration; $T$ set of temperatures to sample with. \\
$S'\gets \{\}$ \\
$\mathcal{X} \gets \{\mathbf{x}\mid (\mathbf{x},y)\in \textsc{TopK}(S, n_s)\}$\Comment{\small{Obtain $n_s$ seed examples from the top training examples by score.}} \\
\For{$\mathbf{x}\in\mathcal{X}$}{
    $\mathbf{x}_0\gets \mathbf{x}$ \\
    $i\gets 0$ \\
    \For{$i < n_i$}{
        $\mathbf{x}_{i+1}\gets\textsc{GreedyDecoding}(\pi_\theta;\mathbf{x}_{i})$ \\
        $S'\gets S'\cup \{\mathbf{x}_{i+1}\}$ \\
        $i\gets i+1$ \\
    }
    \For{$t\in T$}{
        $i\gets 0$ \\
        \For{$i < n_i$}{
            $\mathcal{X}_i\gets \{\mathbf{x}_j\sim\pi_\theta^t(\cdot\mid\mathbf{x}_i)\}_{j=1}^{n_o}$\Comment{\small{Sample $n_o$ sequences using temperature $t$.}} \\
            $S'\gets S'\cup \mathcal{X}_{i}$ \\
            $\mathbf{x}_{i+1}\gets \arg\max_{\mathbf{x}'\in\mathcal{X}_i}\pi_\theta(\mathbf{x}'\mid\mathbf{x})$\Comment{\small{Select the highest-likelihood sample as the input for the next iteration.}} \\
            $i\gets i+1$
        }
    }
}
$S'\gets \textsc{Deduplicate}(S')$ \\
\textbf{Output: }$S'$
\end{algorithm}

\begin{algorithm}[hbt!]
\SetAlgoLined
\caption{$\textsc{Filter}(\mathcal{X}, j)$. An algorithm for filtering a dataset $\mathcal{X}$ of sequences down to only $j$ sequences.}\label{alg:filter}
\textbf{Input: }LLM $\pi_\theta$ that generated the sequences; dataset $\mathcal{X}=\{(\mathbf{x},y)\}$ of sequences $\mathbf{x}$ and scores $y$; likelihood threshold $p_\text{min}$; maximum proportion of infeasible sequences $p_\text{max-infeas}$; final output size $k$. \\
$\mathcal{X}\gets \{((\mathbf{x},y), \pi_\theta(\mathbf{x})) \mid (\mathbf{x},y)\in\mathcal{X} \}$\Comment{\small{Compute likelihoods.}} \\
$\mathcal{X}\gets \{(\mathbf{x},y) \mid \pi_\theta(\mathbf{x}) > p_\text{min}, ((\mathbf{x},y), \pi_\theta(\mathbf{x}))\in\mathcal{X}\}$ \\
$\mathcal{X}_\text{feasible}\gets \{(\mathbf{x},y)\mid y\neq -\infty, (\mathbf{x},y)\in\mathcal{X}\}$ \\
$\mathcal{X}_\text{infeasible}\gets \{(\mathbf{x},y)\mid y=-\infty, (\mathbf{x},y)\in\mathcal{X}\}$ \\
$n_\text{feasible}\gets |\mathcal{X}_\text{feasible}|$\\
$n_\text{max-infeas.}\gets n_\text{feasible}\times\frac{p_\text{max-infeas}}{1-p_\text{max-infeas}}$\\
$\mathcal{X}\gets \mathcal{X}_\text{feasible} \cup \textsc{Sample}(\mathcal{X}_\text{infeasible}, n_\text{max-infeas.})$ \Comment{\small{Downsample infeasible examples.}} \\
$\mathcal{X}\gets \textsc{Sample}(\mathcal{X}, \min(j, |\mathcal{X}|))$ \Comment{\small{Subsample $j$ examples.}} \\
\textbf{Output: }$\mathcal{X}$
\end{algorithm}

\subsubsection{Dataset Formatting}\label{app:dataset-formatting}
To format our data, we adapt PropEn \citep{tagasovska2024implicitlyguideddesignpropen}, a technique for matching pairs of examples to implicitly guide the model towards generating sequences that are close by the input but still improve a particular property. Since PropEn was originally designed for pairs of data $(x,y)$, we also adapt PropEn to create triples of data $(x,y_w,y_l)$ for DPO training. In short, given a dataset $\mathcal{D}=\{(x_i,y_i)\}_{i=1}^N$ of input sequences $x_i$ and their scores $y_i$, PropEn creates the following paired dataset:
\begin{equation}
    \mathcal{D}_\text{PropEn}=\left\{(x_i,x_j)\mid (x_i,y_i),(x_j,y_j)\in\mathcal{D}, d(x_i,x_j)\leq\Delta_x,f(x_j)-f(x_i)\in (0, \Delta_y] \right\}
\end{equation}
for thresholds $\Delta_x$ and $\Delta_y$ and distance function $d$. In all our experiments, we use the Hamming distance as $d$ and modify the constraint $f(x')-f(x)\in(0,\Delta_y)$ to $f(x')>f(x)$ since we observed that this looser constraint was more effective in our experiments. We use $\mathcal{D}_\text{PropEn}$ for both $\textsc{Llome-SFT}$ and $\textsc{Llome-MargE}$.

To additionally adapt the PropEn dataset creation process for preference tuning (\emph{e.g.} DPO), we create preference triples using the following constraints:
\begin{equation}
    \mathcal{D}_\text{PropEn-Triples}=\left\{(x_i,x_j,x_k)\bigg\lvert \substack{(x_i,y_i),(x_j,y_j),(x_k,y_k)\in\mathcal{D}\\d(x_i,x_j)\leq\Delta_x,d(x_i,x_k)\leq\Delta_x\\ f(x_j)>f(x_i), f(x_i)\geq f(x_k)}\right\}
\end{equation}
In preference tuning terms, the $x_i$ is the prompt or input sequence, and $x_j$ and $x_k$ are $y_w$ and $y_l$, respectively. We use $\mathcal{D}_\text{PropEn-Triples}$ in \textsc{Llome-DPO}. We formalize this algorithm in Alg. \ref{alg:dataset-formatting}.

For all training algorithms, we format the input as ``\verb|<inc> [3, 1, |$\cdots$\verb|, 5]|" where ``\verb|<inc>|" is a control code meant to indicate to the model that it should edit the sequence to increase the score. Outputs are formatted similarly, but without the control code.

\subsubsection{Iterative Refinement}\label{app:iter-gen}
Our iterative refinement is formalized in Alg. \ref{alg:iter-gen}. Loosely, we select the best $n_s$ training examples from the last round of the \textsc{Llome} outer loop, and provide them as seed inputs to the LLM to refine. For each seed input, we use 10 rounds of iterative generation where the best (highest-likelihood) generation from the previous round is provided as input to the next round. We repeat this process with both greedy decoding and sampling at various temperatures. In the case of greedy decoding, only one generation is obtained per iteration. When we sample, we sample $n_o=10$ outputs at once.

Since LLM outputs tend to become less diverse with more rounds of training, we also implement automatic temperature adjustments after each round of \textsc{Llome}. The default temperature range is $T=[0.6, 0.8, 1.0, 1.2, 1.4, 1.6]$, but if the average Hamming distance of generations from the last \textsc{Llome} round was $<0.075$, then we instead use $T+0.6$. For average Hamming distance between $0.075$ and $0.1$, we use $T+0.4$. For averages between $0.1$ and $0.1$, we use $T+0.2$.

\subsubsection{Genetic Algorithm}\label{app:ga}
\begin{algorithm}[ht!]
    \SetAlgoLined
    \textbf{Input:} initial solution $\widehat{\mathbf x^*}, \widehat{f^*}$, mutation probability $p_m$, recombination probability $p_r$, survival quantile $\alpha$, \# particles $n$\\
    $\mathcal{X}_{\mathrm{pop}} \leftarrow \texttt{mutate}(\{\widehat{\mathbf x^*}\}, p_m, n)$ \\
    \For{$t = 1, \dots, T$}{
        $\mathbf v \leftarrow f(\mathcal{X}_{\mathrm{pop}})$ \\
        \If{$\max v_i > \widehat{f^*}$}{
            $\widehat{\mathbf x^*} \leftarrow \mathrm{argmax} \; v_i$ \\
            $\widehat{f^*} \leftarrow \max v_i$ \\
        }
        $\tau \leftarrow \texttt{quantile}(\mathbf v, 1 - \alpha)$ \\
        $\mathcal{X}_{\mathrm{top}} \leftarrow \{\mathbf x \in \mathcal{X}_{\mathrm{pop}} \mid f(\mathbf x) \geq \tau$ \} \\
        $n' \leftarrow n - |\mathcal{X}_{\mathrm{top}}|$ \\
        $\mathcal{X}_{\mathrm{pop}} \leftarrow \mathcal{X}_{\mathrm{top}} \cup \texttt{recombine}(\mathcal{X}_{\mathrm{top}}, p_r, n')$ \\
        $\mathcal{X}_{\mathrm{pop}} \leftarrow \texttt{mutate}(\mathcal{X}_{\mathrm{pop}}, p_m, 1)$
    } 
    \textbf{Returns:} Estimated maximizer $\widehat{\mathbf x^*}, \widehat{f^*}$
    \caption{Genetic algorithm pseudo-code}
    \label{alg:genetic_algorithm}
\end{algorithm}

\begin{algorithm}[ht!]
    \SetAlgoLined
    \textbf{Input:} initial set $\mathcal{X}$, mutation probability $p_m$, number of mutants $n$. \\
    $\mathcal{X}' = \emptyset $ \\
    \For{$\mathbf x \in \mathcal{X}$}{
        \For{$i = 1, \dots, n$}{
            $\texttt{mask} = \texttt{rand\_like}(\mathbf{x}) < p_m$ \\
            $\texttt{sub} = \texttt{randint}(0, v - 1, \texttt{len}(\mathbf x))$ \\
            $\mathbf x' = \texttt{where}(\texttt{mask}, \texttt{sub}, \mathbf x)$ \\
            $\mathcal{X}' = \mathcal{X}' \cup \{\mathbf x'\}$
        }
    } 
    \textbf{Returns:} $\mathcal{X}'$
    \caption{\texttt{mutate} function}
    \label{alg:mutate_fn}
\end{algorithm}

\begin{algorithm}[ht!]
    \SetAlgoLined
    \textbf{Input:} initial set $\mathcal{X}$, recombine probability $p_r$, number of recombinations $n$. \\
    $\mathcal{X}' = \emptyset $ \\
    $\mathcal{P}^{(1)} = \texttt{draw\_w\_replacement}(\mathcal{X}, n)$ \\
    $\mathcal{P}^{(2)} = \texttt{draw\_w\_replacement}(\mathcal{X}, n)$ \\
    \For{$i = 1, \dots, n$}{
        $\mathbf x^{(1)} = \mathcal{P}_i^{(1)}$ \\
        $\mathbf x^{(2)} = \mathcal{P}_i^{(2)}$ \\
        $\texttt{mask} = \texttt{rand\_like}(\mathbf x^{(1)}) < p_r$ \\
        $\mathbf x' = \texttt{where}(\texttt{mask}, \mathbf x^{(1)}, \mathbf x^{(2)})$ \\
        $\mathcal{X}' = \mathcal{X}' \cup \{\mathbf x'\}$
    } 
    \textbf{Returns:} $\mathcal{X}'$
    \caption{\texttt{recombine} function}
    \label{alg:recombine_fn}
\end{algorithm}

In Algorithms \ref{alg:genetic_algorithm}, \ref{alg:mutate_fn}, and \ref{alg:recombine_fn}, we provide pseudo-code for our genetic algorithm baseline, which we implement in pure PyTorch \citep{paszke2019pytorch}, using the $\texttt{torch.optim}$ API. 

The GA baseline has only four hyperparameters, the total number of particles $n$, the survival quantile $\alpha \in (2/n, 1)$, the mutation probability $p_m$, and the recombination probability $p_r$. 
Generally speaking for best performance one should use the largest $n$ possible, and tune $\alpha$ (which determines the greediness of the optimizer), $p_m$, and $p_r$.

Our genetic algorithm uses mutation probability $p_m=0.005$, $n=1000$ particles per iteration, survival quantile $\alpha=0.1$, and recombination probability $p_r=0.0882$.

\clearpage

\subsection{LLM Training Details}\label{app:llm-training-details}
We train every model for 1 epoch with PyTorch DDP on two A100 GPUs, using training loops implemented with the Huggingface \verb|datasets|, \verb|transformers|, and \verb|trl| libraries. We conducted hyperparameter searches for each LLM training method, using the validation loss from the dataset of the first iteration of \textsc{Llome} to select the best hyperparameters. We also check whether the generated outputs are parsable and conform to the correct format (\emph{i.e.}, a list of the correct length with values in the correct range). If the hyperparameter set-up with the lowest validation loss does not output sequences with the correct format $>90\%$ of the time, then we select the set-up with the next best validation loss that meets these constraints. Notably, these format checks were the most important for DPO. Many DPO-trained models achieve low validation loss despite generating sequences with incorrect format. We use the best hyperparameters tuned from the validation dataset created during the first iteration of \textsc{Llome} but do not repeat hyperparameter tuning in future iterations. All search ranges and final hyperparameter values (in \textbf{bold}) are listed below.

\paragraph{SFT}
We train the SFT models with the AdamW optimizer, with $\beta_1=0.9$, $\beta_2=0.999$, $\lambda=0.01$, and $\epsilon=1\times 10^{-8}$. We also search the following hyperparameter ranges:
\begin{itemize}
    \item Learning rate $\in\{1\times 10^{-7}, \mathbf{1\times 10^{-6}}, 1\times 10^{-5}\}$
    \item Batch size $\in\{16, 32, 64, \mathbf{128}\}$
\end{itemize}

\paragraph{DPO}
We train the DPO models with the RMSprop optimizer, with $\alpha=0.99$, $\lambda=\mu=0$, and $\epsilon=1\times 10^{-8}$. Due to computational constraints, we train with \verb|bf16|.
We also search the following hyperparameter ranges:
\begin{itemize}
    \item Learning rate $\in\{\mathbf{1\times 10^{-7}}, 1\times 10^{-6}\}$
    \item Batch size $\in\{\mathbf{64}, 128\}$
    \item $\beta\in\{0.1, 0.2, 0.4, 0.8\}$
\end{itemize}

\paragraph{MargE} We train the MargE models with the AdamW optimizer, with $\beta_1=0.9$, $\beta_2=0.999$, $\lambda=0.01$, and $\epsilon=1\times 10^{-8}$. We also search the following hyperparameter ranges:
\begin{itemize}
    \item Learning rate $\in\{1\times 10^{-7}, \mathbf{1\times 10^{-6}}\}$
    \item Batch size $\in\{\mathbf{64}, 128\}$
    \item $\lambda\in\{0.2, 0.4, 0.8, 1.0, \mathbf{10.0}\}$
\end{itemize}

\paragraph{REINFORCE} We trained REINFORCE with the same best hyperparameters as MargE. 

\paragraph{Self-Normalization} For both MargE and REINFORCE, we applied self-normalization to the importance weights. That is, if $\mathcal{B}(x,y)$ is the batch of examples that a particular example $(x,y)$ belongs to, then the self-normalized MargE and REINFORCE objectives are as follows:
\begin{align}
    \tilde{\mathcal{L}}_\text{MargE}(\pi_\theta,\pi_\text{Ref};\mathbb{D}_x) &= \underset{\substack{x\sim\mathbb{D}_x,\\ y\sim \pi_\text{Ref}(\cdot\mid x)}}{\mathbb{E}} \left[\tilde{w}(x,y)\left(\frac{\log\pi_\theta(y|x)}{|y|} - r(x,y)\right)  - \lambda \frac{\log \pi_\theta(y|x)}{|y|} \right] \label{eq:marge-self-normalized} \\
    \tilde{\mathcal{L}}_\text{REINFORCE}(\pi_\theta,\pi_\text{Ref};\mathbb{D}_x) &= \underset{\substack{x\sim\mathbb{D}_x,\\ y\sim \pi_\text{Ref}(\cdot\mid x)}}{\mathbb{E}} \left[\tilde{w}(x,y) \left(-r(x,y)\log\pi_\theta(y|x)\right)  - \lambda \frac{\log \pi_\theta(y|x)}{|y|} \right] \label{eq:reinforce-self-normalized}
\end{align}
where
\begin{align}
    w(x,y) &= \pi_\theta(y|x)/\pi_\text{Ref}(y|x), \\
    \tilde{w}(x,y) &= \frac{w(x,y)}{\sum_{(x',y')\in\mathcal{B}(x,y)}w(x',y')}.
\end{align}

\paragraph{Updating $\pi_\text{Ref}$ During Iterative Training} In Algorithm \ref{alg:blo}, we iteratively train the LLM using outputs generated by the last round of iterative refinement. In this iterative training process, at iteration $i$ of the outer loop, we always update $\pi_\text{Ref}$ for DPO, MargE, and REINFORCE such that $\pi_\text{Ref} := \pi_{\theta_{i}}$. 

\clearpage

\subsection{LaMBO-2 Model and Training Details}\label{app:lambo2}
We built upon the implementations of LaMBO-2 in \href{https://github.com/prescient-design/cortex/blob/c318f9e94cd61f3ca625ac3d6f1f32d7d1e63206/cortex/optim/generative/_lambo.py}{cortex} and \href{https://github.com/MachineLearningLifeScience/poli-baselines/tree/main}{poli-baselines}. We use a LaMBO-2 architecture with an encoder that is shared between different output modules: one generative discrete diffusion head, one discriminative head predicting whether a sequence satisfies the constraints of the problem, and one discriminative head predicting the reward of a sequence. 

Each of these modules is made up of 1D CNN residual blocks, with layer normalization and sinusoidal position embeddings (except for the generative head, which is a linear layer directly from the shared embeddings to output logits over the vocabulary). 
The encoder is composed of 2 residual blocks and the two discriminative heads are composed of 1 residual block each, all with kernel width 5. Each residual block has 128 channel dimensions and 128 embed dimensions. We applied diffusion noise and guidance to the encoder embedding, and for each of the discriminative tasks, we trained an ensemble of 8 independently randomly initialized heads. 

\subsubsection{Training set rebalancing}
We employ a data rebalancing strategy during the training of LaMBO-2 to enhance continual learning. The objective is to enable the model to adapt effectively to newly acquired data while preserving knowledge learned from historical data. This is achieved by a sampling mechanism that effectively gives exponentially greater importance to more recent data during training.

\paragraph{Partitioning Scheme:} Training data is partitioned based on the iteration in which it was collected. This partitioning follows a geometric scheme:
\begin{itemize}
    \item Partition 0 contains data from the most recent iteration.
    \item Partition 1 contains data from the two iterations immediately preceding those in Partition 0.
    \item Partition 2 contains data from the four iterations immediately preceding those in Partition 1.
    \item This pattern continues, with Partition $k$ generally containing data from $2^k$ distinct iterations that are older than the data in Partition $k-1$.
\end{itemize}

Formally, let $T$ be the index of the most recent data collection iteration. Data collected during iteration $i$ (where $i \leq T$) is assigned to partition $p(i)$ according to:
$$p(i) = \begin{cases}
0 & \text{if } i = T, \\
k & \text{if } i \in [T - 2^{k+1} + 2, T - 2^{k} + 1] \text{ for } k \geq 1.
\end{cases}$$
Each partition $k$ (for $k\geq1$) thus groups data from $2^k$ consecutive iterations.

\paragraph{Round-Robin Partition Rebalancing Sampling Mechanism:} During training, we construct minibatches by sampling one datapoint from each active partition. 
Since more recent data is grouped into smaller partitions (e.g., Partition 0 contains data from only one iteration, while Partition $k$ contains data from $2^k$ iterations), datapoints from recent partitions have a higher probability of being selected. 
This sampling strategy naturally creates the desired exponential weighting, prioritizing recent information.
 
To prepare data for training:
\begin{itemize}
    \item Datapoints within each individual partition are randomly shuffled.
    \item The order in which the partitions themselves are processed for sampling is also randomized. This shuffled order defines a cycle for drawing samples.
\end{itemize}

To form a training minibatch of a predefined size:
\begin{itemize}
    \item We iterate through the partitions according to their current shuffled order.
    \item From each partition encountered in this cycle, one (randomly selected) datapoint is drawn and added to the current minibatch.
    \item This process continues, drawing one datapoint from each subsequent partition in the shuffled order, until the minibatch reaches its target size.
    \item If all partitions have been sampled from (i.e., one cycle through the shuffled partition order is complete) and the minibatch is not yet full, the order of partitions is re-shuffled, and sampling continues from the newly ordered partitions until the minibatch is filled.
    \item When any partition is exhausted, the examples in that partition are shuffled and sampling continues.
\end{itemize}

\subsubsection{Controlling diversity through candidate selection}

We also introduce a new hyperparameter to LaMBO-2, the ``farthest first traversal (FFT) expansion factor'', which provides a lever to control the diversity of solutions used as starting points for guided diffusion. For applications with a similar number of ground truth evaluations $n$ per evaluation iteration compared to the total number of iterations (e.g. a 24-well plate of samples for each of 10 iterations), the best FFT expansion factor $\alpha$ is usually greater than one. In this setting, we retrieve the top $\alpha n$ solutions seen over the optimizer's history and use farthest first traversal to select the subset of $n$ solutions which capture as much sequence diversity as possible among the original $\alpha n$ solutions. The FFT procedure iteratively adds a solution to the selected subset if it is maximally distant from all the currently selected solutions (See Algorithm~\ref{algorithm:fft} for pseudo-code). In this small $n$ setting, $\alpha > 1$ prevents generator collapse and trades exploitation for more exploration.

 In contrast, when $n$ is very large compared to the number of iterations (such as in the experiments for this paper, with $n=2000$ and only 10 iterations), it is best to set $\alpha < 1$. In this case, we select the top $\alpha n$ solutions from the optimizer's history to seed the next round, repeating them as needed to have exactly $n$ seeds. Since the guided diffusion is not deterministic, despite seeding multiple generative trajectories from the same starting sequence, the model still generates solution pools with high diversity. See Algorithm~\ref{algorithm:candidate_points} for pseudo-code of our candidate starting point selection strategy, covering both the $\alpha < 1$ and $\alpha >1$ settings.

\begin{algorithm}[ht!]
\SetAlgoLined
\caption{Get Candidate Starting Points from LaMBO-2 Solution History}
\textbf{Input:} History of solutions $x$, scores $y$, FFT expansion factor $\alpha$, number of samples to evaluate this iteration $n$, edit distance function \texttt{edit\_dist}\\

$\texttt{sorted\_indices} \gets \text{argsort}(y, \text{descending=True})$\\
$K \gets \min(\text{len}(x), \alpha n)$ \\
$\texttt{candidate\_points} \gets x[\texttt{sorted\_indices}[:K]]$ \\
\eIf{$\alpha > 1$}{
    $\texttt{indices} \gets \text{FarthestFirstTraversal}(\texttt{candidate\_points}, \texttt{edit\_dist}, \texttt{candidate\_scores}, n)$\\
    }{
        $\texttt{repeat\_factor} \gets \lfloor 1/\alpha \rfloor + 1$\\
        $\texttt{candidate\_points} \gets \text{repeat}(\texttt{candidate\_points}, \texttt{repeat\_factor})[:n]$ \\
        $\texttt{indices} \gets \text{arange}(\text{len}(\texttt{candidate\_points}))$\\
    }
\textbf{Returns: } $\texttt{candidate\_points}[\texttt{indices}]$
\label{algorithm:candidate_points}
\end{algorithm}

\begin{algorithm}[ht!]
\SetAlgoLined
\caption{Farthest First Traversal}
\textbf{Input:} Library $L$ of $N$ elements, distance function $d$, ranking scores $s$, number of elements to select $n$\\

$R \gets \text{argsort}(s)$ \tcp{Sort indices by scores}
$S \gets [R[0]]$ \tcp{Initialize selected indices with first element}
$R \gets R[1:]$ \tcp{Remove first element from remaining}
$PQ \gets \emptyset$ \tcp{Initialize priority queue}
\For{$i = 0, \ldots, |R|-1$}{
    $\texttt{dist} \gets -d(L[i], L[S[0]])$ \tcp{Negative distance for max-heap}
    $PQ.\text{push}((\texttt{dist}, s[i], i, 1))$
}
\For{$i \gets 1$ \KwTo $n-1$}{
\While{True}{
    $(dist, score, idx, checked) \gets PQ.\text{pop}()$\\
    \eIf{$checked < |S|$}{
        $min\_dist \gets \min(d(L[idx], L[S[j]])$ for $j \in [checked, |S|)$\\
        $min\_dist \gets \min(min\_dist, -dist)$\\
            $PQ.\text{push}((-min\_dist, score, idx, |S|))$
        }{
        $S.\text{append}(idx)$\\
        \textbf{break}
        }
    }
}
\textbf{Returns: }  $S$
\label{algorithm:fft}
\end{algorithm}
\newpage
\subsubsection{Training and generation hyperparameter sweeps}

We trained LaMBO-2 with a batch size of 128, using the Adam optimizer with $\beta_1 = 0.9, \beta_2 = 0.99, \gamma = 0.005$ and no weight decay. We also searched over the following hyperparameter ranges:
\begin{itemize}
    \item Number of design steps per iteration $\in\{8, 16, \mathbf{32}\}$
    \item Number of mutations per design step $\in\{2, \mathbf{4}, 8, 16\}$
    \item Diffusion guidance step size $\in\{0.01, \mathbf{0.05}, 0.1, 0.2\}$
    \item Training epochs $\in\{8, 16, 50, \mathbf{100}\}$
    \item Farthest first traversal (FFT) expansion factor $\in\{0.1, \mathbf{0.25}, 0.5, 1.0\}$
\end{itemize}

\clearpage

\subsection{Additional Results}\label{app:additional-results}
\paragraph{How sensitive is each method to hyperparameters, test function difficulty, and seed dataset?} An important aspect of an optimization algorithm is how robust it is to hyperparameter settings, problem difficulty, and choice of seed examples. To explore robustness, we present the Pareto frontiers and corresponding hypervolumes of all methods across all test functions for evaluation budgets ranging from 1K to 30K in Fig. \ref{fig:pareto-frontiers}. All three \textsc{Llome} variants exhibit significantly lower average hypervolume than both the GA and \textsc{LaMBO-2}.

\begin{figure*}[ht!]
    \centering
    \includegraphics[width=\linewidth]{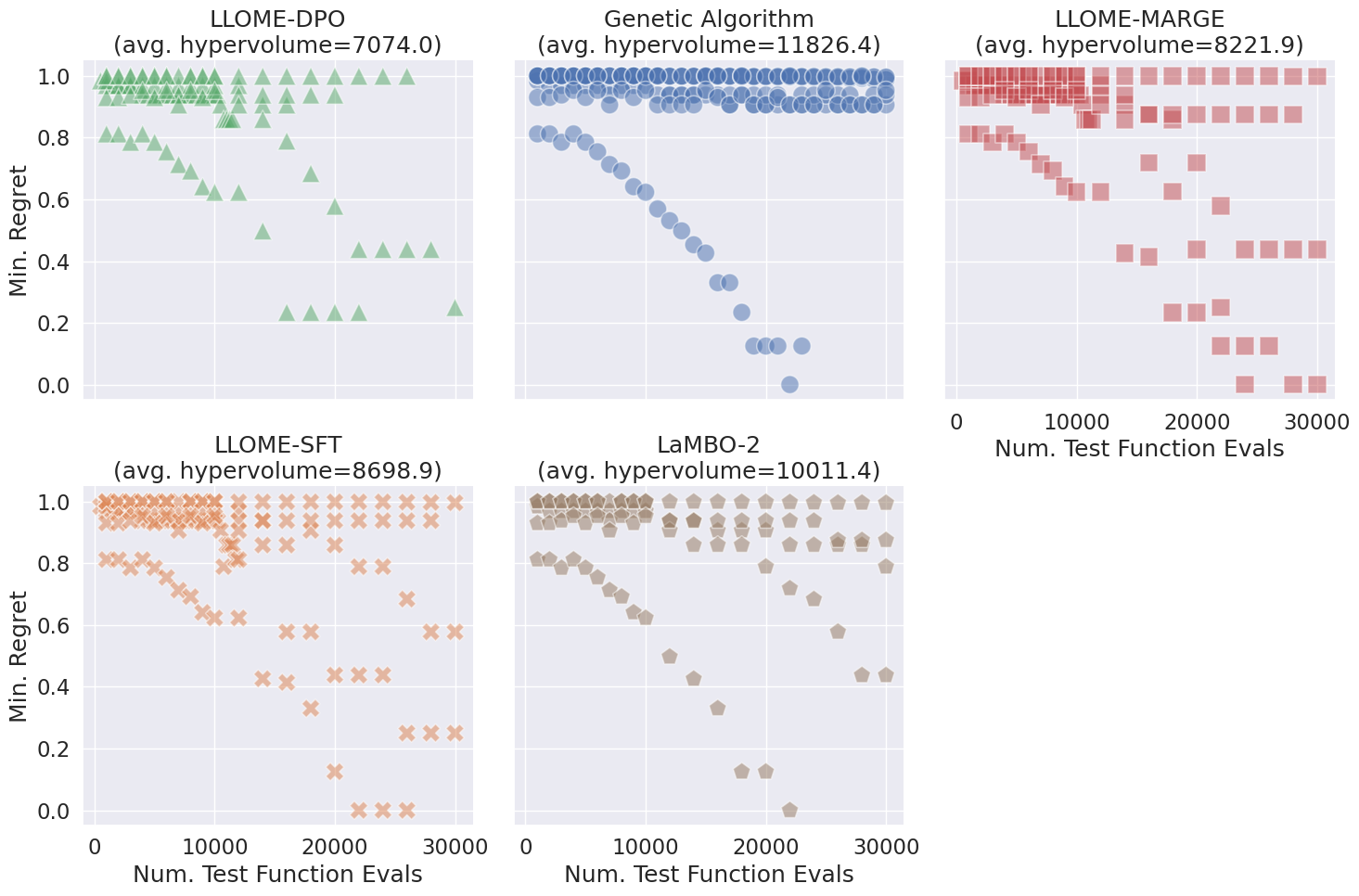}
    \caption{Pareto frontiers of evaluation budget vs. minimum regret for a variety of \textsc{Llome} and \textsc{LaMBO-2} hyperparameter settings, Ehrlich functions, and seed datasets. The average hypervolume refers to the average $(\text{number of test function evaluations}\times\text{minimum regret})$. Lower hypervolume is better. However, the average hypervolume of \textsc{Llome-DPO} is artificially deflated due to many DPO experiments ending prematurely as a result of generator collapse.}
    \label{fig:pareto-frontiers}
\end{figure*}

\paragraph{How sensitive is LLOME to model size?}
We compare \textsc{LaMBO-2} against \textsc{LLOME-MargE} with a smaller LLM (~226K params, based on the LLaMA architecture), trained from scratch, since the Pythia model used in our main results (Fig. \ref{fig:solver_regret_comparison_main_text}) is much larger in model size (2.8B parameters versus 314K) and has been pre-trained on the Pile \citep{pile}. Although pre-training should not offer any additional advantages due to the lack of overlap between Ehrlich functions and the pre-training data, we choose this setting to be similar in model size and training to LaMBO-2. We evaluate on the $f_2$ function (i.e. \textbf{Ehr(32, 32)-4-4-4}). Results are shown in Fig. \ref{fig:min_regret_small_models_v32_d32_c4_k4_q4}. Although \textsc{LLOME-MargE (LLaMA 226K)} performs comparably to \textsc{LLOME-MargE (Pythia 2.8B)} and \textsc{LaMBO-2} up to 22K test function evaluations, its performance eventually plateaus, perhaps owing to limited model capacity. Additionally, the 226K model often exhibits significant training instability, including numerous spikes in both loss and gradient norm. This instability may be due to the lack of pre-training.

\begin{figure*}[ht!]
    \centering
    \includegraphics[width=0.5\linewidth]{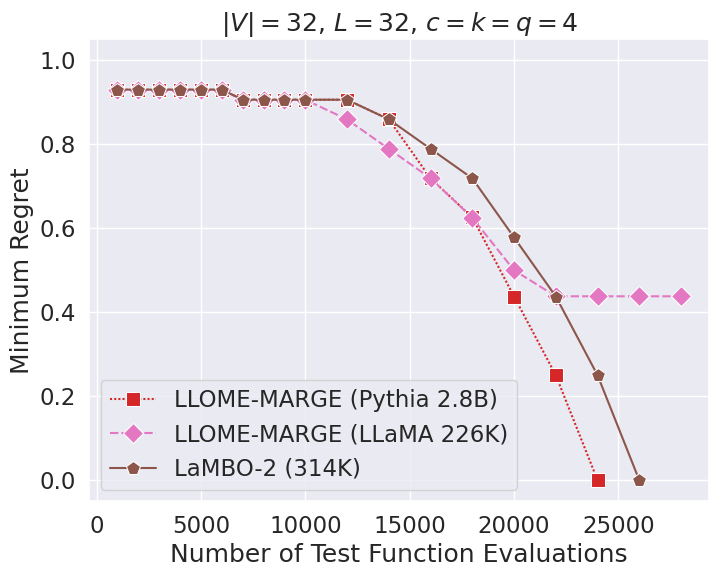}
    \caption{Minimum regret achieved by \textsc{LLOME-MargE} with two LLMs of different sizes, compared against \textsc{LaMBO-2} on test function $f_2$.}
    \label{fig:min_regret_small_models_v32_d32_c4_k4_q4}
\end{figure*}

\paragraph{At inference time, LLMs can iteratively extrapolate beyond their training distributions.} Extrapolating beyond the training distribution is a well-known machine learning problem \citep{Bommasani2021FoundationModels,alibi,li-etal-2024-exploring-mathematical}, especially without explicit guidance provided at inference time. Although some prior work has shown LLMs to be effective at iteratively generating sequences that monotonically increase a particular attribute  to values beyond the training distribution \citep{chan2021deepextrapolation,padmakumar2023extrapolative}, much of this work focuses on simpler tasks such as increasing the positive sentiment of text or decreasing the $\Delta\Delta G$ of a well-studied protein. Since optimizing an Ehrlich function requires satisfying multiple constraints in addition to generating sequences that lie within a small feasible region, we posit that this evaluation is a more challenging assessment of LLMs' inference-time extrapolation capabilities. 

We display the iterative refinement results of \textsc{Llome}'s inner loop in Fig. \ref{fig:iterative-gen-all}, which suggest that LLMs  iteratively produce edits that significantly reduce regret. However, the first few edits frequently improve the sequence whereas later edits are less likely to be helpful. This suggests that LLM's inference-time extrapolative capabilities are limited -- without further training or explicit guidance, LLMs may be unable to continuously improve a given sequence beyond a certain threshold. By alternating between optimizing the model's parameters and optimizing the model's outputs, we provide a sample-efficient method for iteratively bootstrapping the model's extrapolative abilities using its own generations.

\begin{figure}[ht!]
    \centering
    \includegraphics[width=0.5\linewidth]{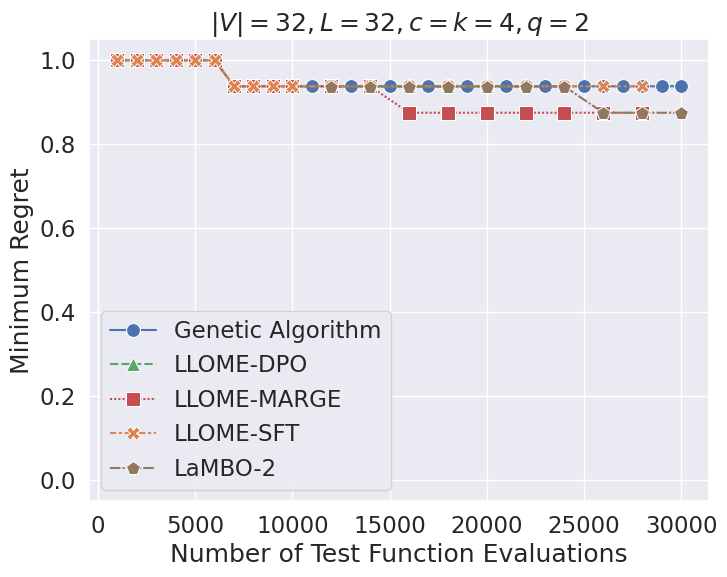}
    \caption{Minimum regret achieved as a function of the number of test function evaluations, on a test function similar to $f_2$ but with half quantization.}
    \label{fig:min_regret_v32_d32_c4_k4_q2}
\end{figure}
\begin{figure*}[ht!]
    \centering
    \includegraphics[width=0.9\linewidth]{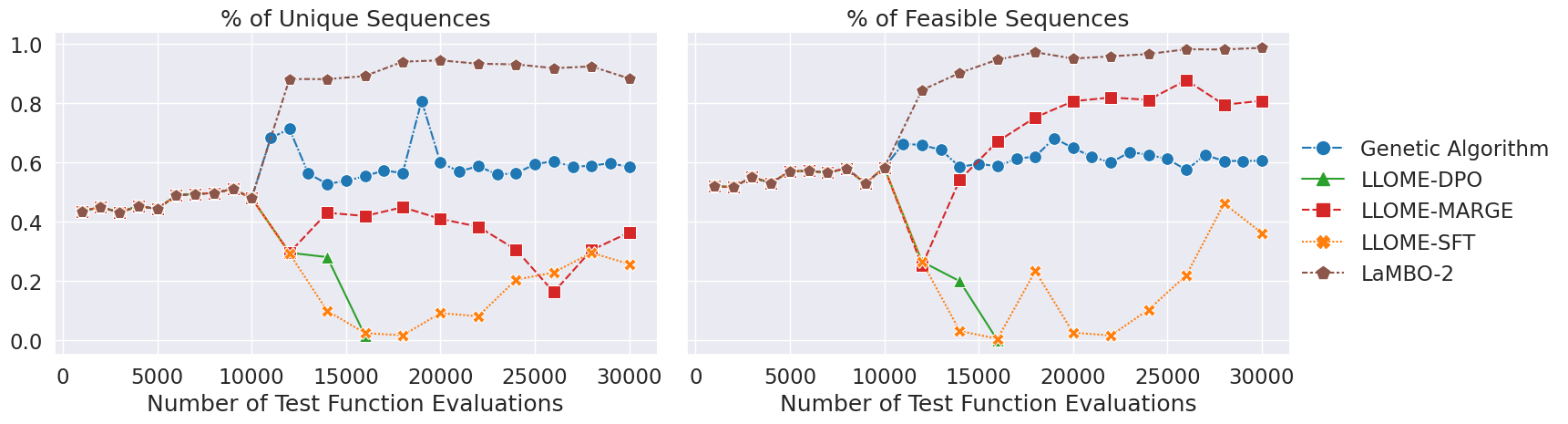}
    \caption{
        The percentage of generated sequences for $f_3$ that are unique or feasible. The line for \textsc{Llome-DPO} ends early due to degeneration of solutions.
    }
    \label{fig:solver_diagnostics_appendix}
\end{figure*}
\begin{figure}[ht!]
    \centering
    \begin{subfigure}[b]{0.45\textwidth}
         \centering
         \includegraphics[width=\textwidth]{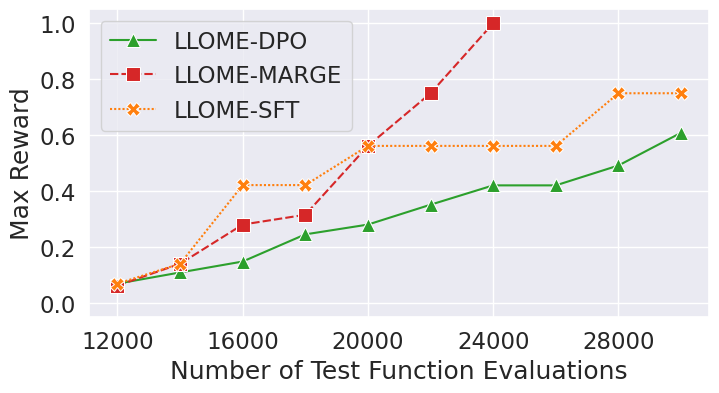}
         \caption{Maximum reward achieved.}
         \label{fig:max-reward}
     \end{subfigure}
    \hfill
    \begin{subfigure}[b]{0.45\textwidth}
         \centering
         \includegraphics[width=\textwidth]{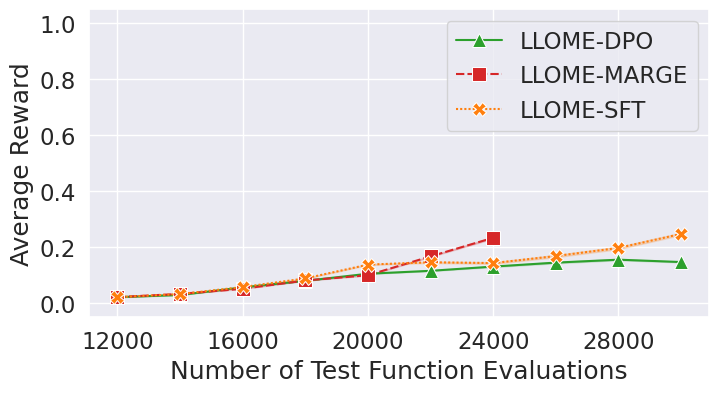}
         \caption{Average reward.}
         \label{fig:avg-reward}
     \end{subfigure}
         \caption{Average and maximum reward achieved by methods that rely upon editing the original sequence. The shaded regions in (\ref{fig:avg-reward}) represent the 95\% confidence interval. The lines for \textsc{Llome-MargE} end early because \textsc{Llome-MargE} discovers the optimal solution early.}
    \label{fig:reward}
\end{figure}

In Figure \ref{fig:reward}, we show the average and maximum reward achieved by each \textsc{Llome} variant on $f_2$. While all three variants achieve similar maximum reward throughout all iterations, \textsc{Llome-SFT} and \textsc{Llome-MargE} achieve significantly higher average reward than \textsc{Llome-DPO}.

\begin{figure}[ht!]
    \centering
    \includegraphics[width=0.45\linewidth]{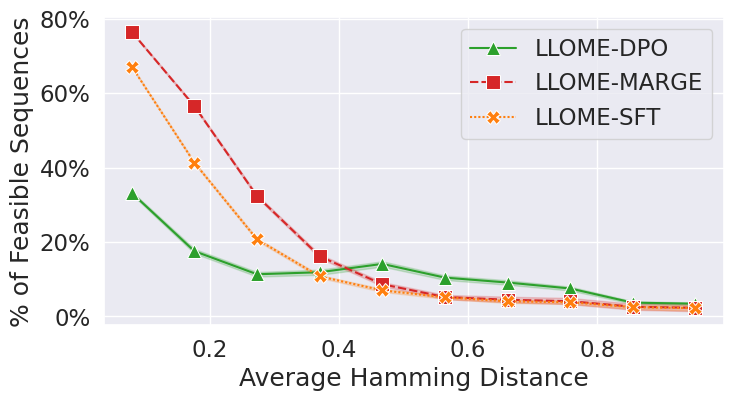}
    \caption{The percentage of feasible LLM-generated sequences, binned by the average Hamming distance (normalized by length) between the input and output. Shaded regions indicate the 95\% confidence interval.}
    \label{fig:hamming-distance-vs-feasible}
\end{figure}
\begin{figure}
    \centering
    \includegraphics[width=0.8\linewidth]{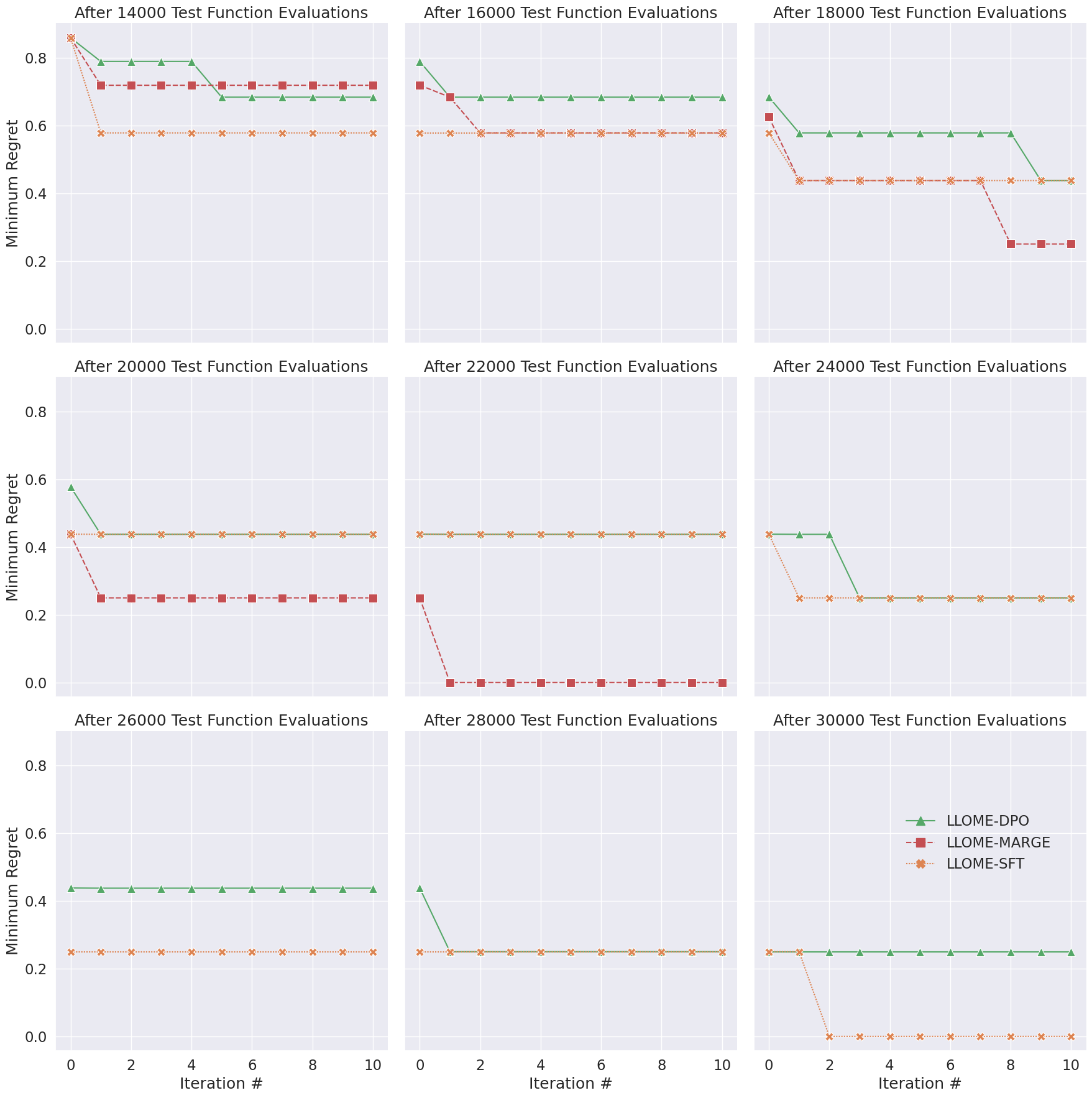}
    \caption{Minimum regret of sequences generated during the LLM inner loop, at each iteration of the iterative refinement process. The titles reflect the number of oracle labels that each LLM has been trained on. These plots account for all generations sampled from the LLM inner loop, and not just the samples selected via likelihood selection, as in Alg. \ref{alg:iter-gen}.}
    \label{fig:iterative-gen-all}
\end{figure}

\begin{figure}[ht!]
    \centering
    \includegraphics[width=0.5\linewidth]{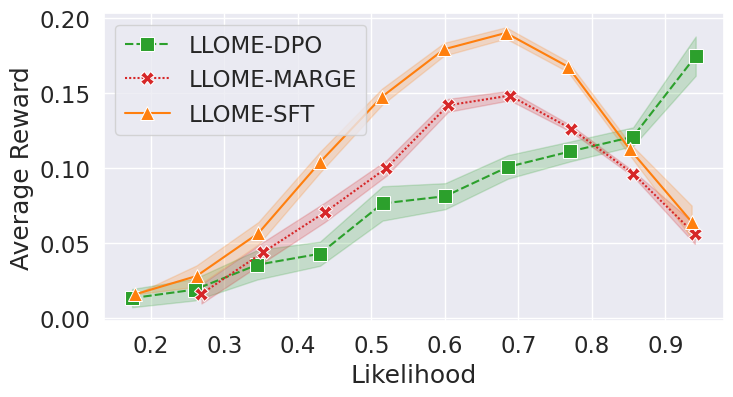}
    \caption{Likelihood vs. reward.}
    \label{fig:calibration-reward}
\end{figure}


\begin{figure*}[ht!]
    \centering
    \includegraphics[width=\linewidth]{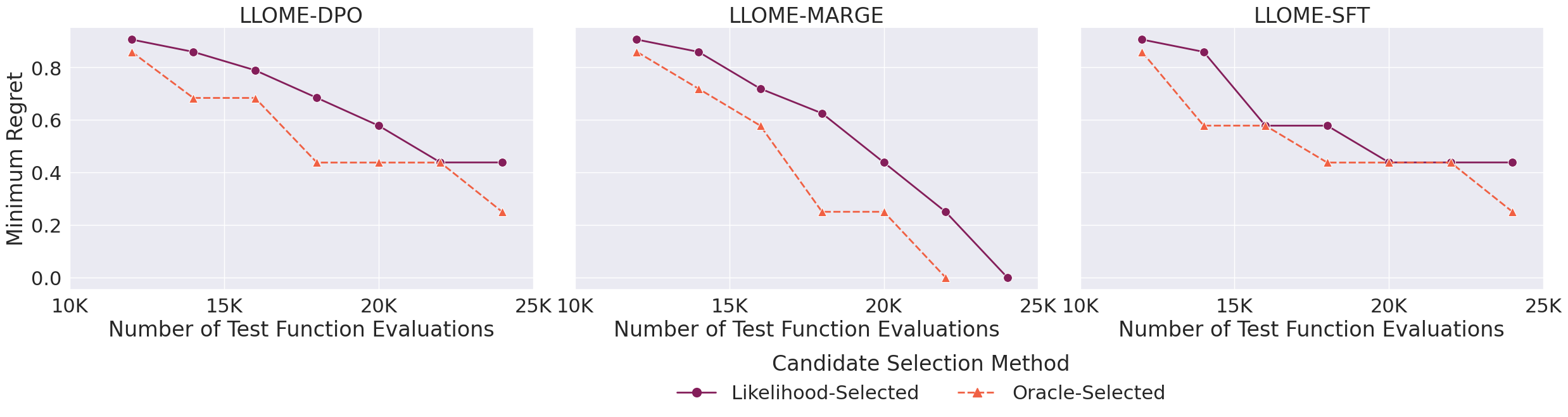}
    \caption{Minimum regret of candidates selected by either LLM likelihood or the oracle on the $f_2$ test function. Since the first 10K test function evaluations are seeds derived from the genetic algorithm, we show only candidates generated after the first 10K.}
    \label{fig:candidate-selection}
\end{figure*}

\begin{figure}[ht!]
    \centering
    \includegraphics[width=0.5\textwidth]{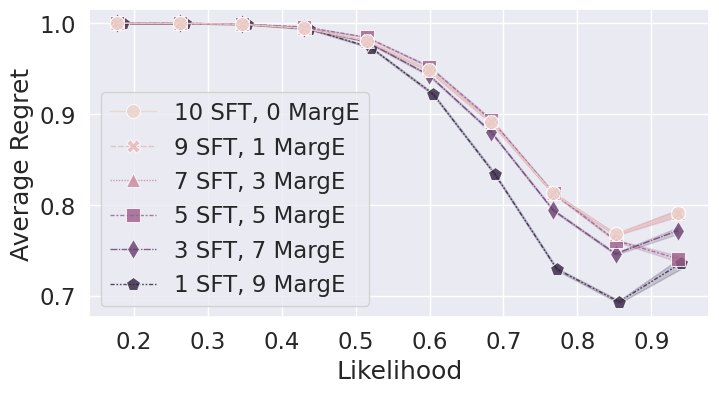}
    \caption{Calibration curve of likelihood vs. regret for multi-stage SFT+MargE training.}
    \label{fig:sft-marge-calibration-regret}
\end{figure}


\end{document}